\newcommand{\pbcount}{\ensuremath{\mathsf{PBCount}}}
\newcommand{\pbcounttwo}{\ensuremath{\mathsf{PBCount2}}}
\newcommand{\pbcountcg}{{\pbcounttwo}}
\newcommand{\greedymergetext}{Least Occurrence Weighted Min Degree}
\newcommand{\greedymergeit}{\textit{\greedymergetext}}
\newcommand{\greedymergeshort}{\ensuremath{\mathsf{LOW}\text{-}\mathsf{MD}}}
\newcommand{\apply}{\ensuremath{\mathsf{Apply}}}
\newcommand{\func}[1]{\ensuremath{\mathsf{Func}}(#1)}
\newcommand{\dpmc}{\ensuremath{\mathsf{DPMC}}}
\newcommand{\gpmc}{\ensuremath{\mathsf{GPMC}}}
\newcommand{\exactmc}{\ensuremath{\mathsf{ExactMC}}}
\newcommand{\approxmcpb}{\ensuremath{\mathsf{ApproxMCPB}}}
\newcommand{\dfour}{\ensuremath{\mathsf{D4}}}
\newcommand{\addmc}{\ensuremath{\mathsf{ADDMC}}}
\newcommand{\pblib}{\ensuremath{\mathsf{PBLib}}}
\newcommand{\var}[1]{\ensuremath{\mathsf{Var}(#1)}}
\newcommand{\addproj}{\ensuremath{\Sigma\text{-}\mathsf{projection}}}
\newcommand{\orproj}{\ensuremath{\exists\text{-}\mathsf{projection}}}
\newcommand{\child}[1]{\ensuremath{\mathsf{Ch}(#1)}}
\newtheorem{definition}{Definition}
\newtheorem{lemma}{Lemma}
\newtheorem{theorem}{Theorem}
\newenvironment{mythm}[1]
  {\innercustomthm}
  {\endinnercustomthm}
\newcommand\ScaleExists[1]{\vcenter{\hbox{\scalefont{#1}$\exists$}}}
\DeclareMathOperator*\bigexists{%
  \vphantom\sum
  \mathchoice{\ScaleExists{2}}{\ScaleExists{1.4}}{\ScaleExists{1}}{\ScaleExists{0.75}}
}
\title{Towards Projected and Incremental Pseudo-Boolean Model Counting}
\author {
    Suwei Yang\textsuperscript{\rm 1,\rm 2,\rm 4},
    Kuldeep S. Meel\textsuperscript{\rm 3,\rm 5}
}
\begin{document}

\maketitle

\begin{abstract}
Model counting is a fundamental task that involves determining the number of satisfying assignments to a logical formula, typically in conjunctive normal form (CNF). While CNF model counting has received extensive attention over recent decades, interest in Pseudo-Boolean (PB) model counting is just emerging partly due to the greater flexibility of PB formulas. As such, we observed feature gaps in existing PB counters such as a lack of support for projected and incremental settings, which could hinder adoption.

In this work, our main contribution is the introduction of the PB model counter {\pbcounttwo}, the first exact PB model counter with support for projected and incremental model counting. Our counter, {\pbcounttwo}, uses our {\greedymergetext} ({\greedymergeshort}) computation ordering heuristic to support projected model counting and a cache mechanism to enable incremental model counting. In our evaluations, {\pbcounttwo} completed at least 1.40$\times$ the number of benchmarks of competing methods for projected model counting and at least 1.18$\times$ of competing methods in incremental model counting.
\end{abstract}

\section{Introduction} \label{sec:introduction}

Pseudo-Boolean (PB) model counting involves determining the number of satisfying assignments of a given pseudo-boolean formula, which differs from typical model counting tasks where the input is a Boolean formula in conjunctive normal form (CNF). In recent years, there has been increasing interest from the community in PB model counting and PB formulas in general, in part due to their succinctness and flexibility over CNF formulas~\cite{S05}. In particular, an arbitrary CNF clause can always be converted to a single PB constraint, but the converse is not true~\cite{LMMW18}. The emerging interests take the form of new PB model counting tools such as {\approxmcpb} and {\pbcount}~\cite{YM21,YM24} as well as the emergence of applications that include knapsack problems, neural network verification, and budgeted sensor placement~\cite{P05,YM21,LSM23}. 

In contrast, CNF model counting and solving (SAT solving) are well-established fields, benefiting from many decades of research advancements that have led to numerous feature-rich counters and solvers. The availability of tools for CNF model counting and solving has led to an ever-increasing number of applications across a wide range of domains such as software design, explainable machine learning, planning, and probabilistic reasoning~\cite{BDP03,NSMIM19,J19,FMM20}. In turn, the wide range of applications drives demand for better tools, from both performance and feature perspectives. Subsequently, better tools would lead to more adoption and applications. Eventually, a positive self-reinforcing loop is formed within the CNF model counting and solving community. On the other hand, we have yet to observe such a positive cycle in the PB model counting community, in part due to a lack of features such as exact projected counting and incremental setting support in existing PB model counting tools.

In this work, our main contribution is the PB model counter, {\pbcounttwo}, which supports projected and incremental model counting features. We focus on these two aforementioned features due to their importance to the CNF model counting and SAT solving community. Projected model counting involves determining the number of partial assignments over a given set of variables, known as the projection set, that can be extended to a satisfying assignment over all variables of the formula. Projected model counting has been applied in areas such as planning, verification, and reliability estimation~\cite{ACMS15,KMC13, DMPV17}. Similarly, incremental SAT solving has led to applications such as combinatorial testing, circuit design, and solving string constraints~\cite{YKACOB15,YZLCH17,LGDK23}. In this work, we focus on the incremental setting whereby a given PB formula undergoes incremental modifications, more specifically addition and removal of PB constraints, while having its model count computed along each modification step. In order to support projected PB model counting, we introduce a new computation ordering heuristic that is compatible with the ordering requirements for projected model counting. Additionally, we introduce a new caching mechanism that is at the core of incremental counting. {\pbcounttwo} is to the best of our knowledge both the first exact projected PB counter as well as the first incremental model counter. 

We performed extensive evaluations on benchmark instances inspired by various application settings, such as sensor placement, multi-dimension knapsack, and combinatorial auction benchmarks~\cite{GL80,BN07,LSM23}. Our evaluations highlighted the efficacy of {\pbcountcg} at projected PB model counting and incremental model counting compared to the baseline techniques. In particular, {\pbcountcg} is able to successfully count 1957 projected model counting instances while the state-of-the-art CNF-based projected model counter could only count 1398 instances. 
Incremental benchmarks involve multiple modifications of the initially provided PB formula, with model counts computed after each modification step. Our experiments showed that {\pbcountcg} is able to complete 1618 instances for incremental benchmarks involving 5 counts with 1 count for the initial PB formula and 4 counts for modification steps. In comparison, the state-of-the-art PB model counter, {\pbcount}, completed only 1371 instances, demonstrating a significant performance advantage of {\pbcountcg} at incremental counting. 
Moreover, {\pbcounttwo} also demonstrates superior performance at incremental settings compared to the state-of-the-art CNF model counters {\dfour} and {\gpmc}, both of which completed less than 1000 incremental benchmarks. 
Given that it is the early days of PB model counting, we hope the new capabilities introduced in this work will attract more interest and applications of PB model counting, leading to a positive self-reinforcing cycle within the PB model counting community.

\section{Notations and Preliminaries} \label{sec:background}

\paragraph{Pseudo-Boolean Formula}

A PB formula $F$ consists of one or more PB constraints, each of which is either equality or inequality. A PB constraint takes the form $\sum_{i=1}^{n} a_i x_i \square k$ where $x_1, ..., x_n$ are Boolean literals, $a_1, ..., a_n$, and $k$ are integers, and $\square$ is one of $\{\geq, =, \leq\}$. We refer to $a_1, ..., a_n$ as term coefficients in the PB constraint, where each term is of the form $a_i x_i$. $F$ is satisfiable if there exists an assignment $\tau$ of all variables of $F$ such that all its PB constraints evaluate to \textit{true}. PB model counting refers to the computation of the number of satisfying assignments of $F$.

\paragraph{Projected Model Counting}
Let $F$ be a formula and \var{F} be the set of variables of $F$. Let $X$ and $Y$ be disjoint subsets of \var{F} such that $X \cap Y = \emptyset$  and $X \cup Y = \var{F}$. $(F,X,Y)$ is a projected model counting instance. The projected model count is $\sum_{\beta \in 2^X} ( \max_{\alpha \in 2^Y} [F](\alpha \cup \beta))$~\cite{DPM21}. Where $[F](\alpha \cup \beta)$ is the evaluation of $F$ with the variable assignment $\alpha \cup \beta$, and returns 1 if $F$ is satisfied and 0 otherwise. In other words, the projected model count of $F$ on $X$ is the number of assignments of all variables in $X$ such that there exists an assignment of variables in $Y$ that makes $F$ evaluate to \textit{true}~\cite{ACMS15}. Non-projected model count is a special case of projected model counting where all variables are in the projection set, i.e. $X = \var{F}$, and $Y = \emptyset$. With reference to Figure~\ref{fig:add-constraint}, suppose the PB formula has only the single PB constraint $2 x_1 + x_2 + x_3 \geq 2$, then the 5 satisfying assignments are $(x_1, x_2, x_3), (x_1, x_2, \bar{x_3}), (x_1, \bar{x_2}, x_3), (x_1, \bar{x_2}, \bar{x_3}),$ and $(\bar{x_1}, x_2, x_3)$. If the projection set is ${x_1}$ then the corresponding projected model count is 2, as both partial assignments involving $x_1$ can be extended to a satisfying assignment.

\paragraph{Projections}
Let $f : 2^{X} \rightarrow \mathbb{R}$ be a function defined over a set of boolean variables $X$. The {\addproj} of $f$ with respect to a variable $x \in X$ for all $\sigma$ is given by $f(\sigma \land x) + f (\sigma \land \bar{x})$ where $\sigma$ refers to assignments of all variables excluding $x$. Similarly, the {\orproj} with respect to $x$ is given by $\max \left( f(\sigma \land x), f (\sigma \land \bar{x}) \right)$. In the case that $f$ maps to 1 or 0, i.e. $f : 2^{X} \rightarrow \{ 0,1 \}$, then {\orproj} can be written as $f(\sigma \land x) \vee f (\sigma \land \bar{x})$. In this work, we use the two types of projections to compute the projected model count in Algorithm~\ref{alg:pbcount-greedy-mc} which we detail in Section~\ref{subsec:projected-mc}.

\paragraph{Algebraic Decision Diagram}

An algebraic decision diagram (ADD)~\cite{BFGHMPF93} is a directed acyclic graph (DAG) representation of a function $f: 2^{\var{F}} \rightarrow S$ where \var{F} is the set of Boolean variables that $f$ is defined over, and $S$ is known as the carrier set. We denote the function represented by an ADD $\psi$ as $\func{\psi}$. The internal nodes of ADD represent decisions on variables $x \in \var{F}$ and the leaf nodes represent $s \in S$. The order of variables represented by decision nodes, from root to leaf of ADD, is known as the variable ordering. In this work, $S$ is a set of integers. As example, an ADD representing $2 x_1 + x_2 + x_3$ is shown in Figure~\ref{fig:add-expression}, and $2 x_1 + x_2 + x_3 \geq 2$ in Figure~\ref{fig:add-constraint}. The dashed arrows of internal nodes represent when the corresponding variables are set to \textit{false}, and solid arrows represent when they are set to \textit{true}.

\begin{figure}
  \centering
  \begin{adjustbox}{width=0.55\columnwidth}
  \begin{tikzpicture}[
  roundnode/.style={circle, draw=black!60, very thick, minimum size=7mm},
  ]
  \node[roundnode](x1) at (0, 0){$x_1$};
  \node[roundnode](x2left) at (-1, -1.0){$x_2$};
  \node[roundnode](x2right) at (1, -1.0){$x_2$};
  \node[roundnode](x3ll) at (-2, -2.20){$x_3$};
  \node[roundnode](x3lr) at (-0.75, -2.20){$x_3$};
  \node[roundnode](x3rl) at (0.75, -2.20){$x_3$};
  \node[roundnode](x3rr) at (2, -2.20){$x_3$};

  \node[roundnode](leaf0) at (-2.5, -3.4){$0$};
  \node[roundnode](leaf1) at (-1.25, -3.4){$1$};
  \node[roundnode](leaf2) at (0, -3.4){$2$};
  \node[roundnode](leaf3) at (1.25, -3.4){$3$};
  \node[roundnode](leaf4) at (2.5, -3.4){$4$};

  \draw[dashed,->] (x1) -- (x2left);
  \draw[->] (x1) -- (x2right);
  \draw[dashed,->] (x2left) -- (x3ll);
  \draw[->] (x2left) -- (x3lr);
  \draw[dashed,->] (x2right) -- (x3rl);
  \draw[->] (x2right) -- (x3rr);

  \draw[dashed,->] (x3ll) -- (leaf0);
  \draw[->] (x3ll) -- (leaf1);
  \draw[dashed,->] (x3lr) -- (leaf1);
  \draw[->] (x3lr) -- (leaf2);
  \draw[dashed,->] (x3rl) -- (leaf2);
  \draw[->] (x3rl) -- (leaf3);
  \draw[dashed,->] (x3rr) -- (leaf3);
  \draw[->] (x3rr) -- (leaf4);

  \end{tikzpicture}
  \end{adjustbox}
  \caption{An ADD representing $2 x_1 + x_2 + x_3$ with ordering $x_1,x_2,x_3$}
  \label{fig:add-expression}
\end{figure}
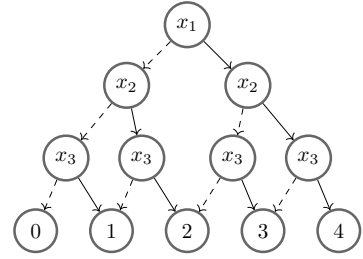

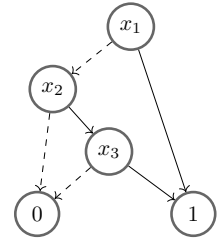
\begin{figure}
  \centering
  \begin{adjustbox}{width=0.32\columnwidth}
  \begin{tikzpicture}[
      roundnode/.style={circle, draw=black!60, very thick, minimum size=7mm},
  ]
  \node[roundnode](x1) at (0, 0){$x_1$};
  \node[roundnode](x2left) at (-1.25, -1.0){$x_2$};
  \node[roundnode](x3left) at (-0.35, -2.0){$x_3$};
  \node[roundnode](leaf0) at (-1.5, -3.0){$0$};
  \node[roundnode](leaf1) at (1, -3.0){$1$};

  \draw[dashed,->] (x1) -- (x2left);
  \draw[->] (x1) -- (leaf1);
  \draw[dashed,->] (x2left) -- (leaf0);
  \draw[->] (x2left) -- (x3left);
  \draw[dashed,->] (x3left) -- (leaf0);
  \draw[->] (x3left) -- (leaf1);

  \end{tikzpicture}
  \end{adjustbox}
  \caption{An ADD representing $2 x_1 + x_2 + x_3 \geq 2$ with ordering $x_1,x_2,x_3$}
  \label{fig:add-constraint}
\end{figure}

In this work, we make use of the {\apply} operation on ADDs~\cite{B86,BFGHMPF93}. The {\apply} operation takes as input a binary operator $\bowtie$, two ADDs $\psi_1, \psi_2$, and outputs an ADD $\psi_3$ such that the $\func{\psi_3} = \func{\psi_1} \bowtie \func{\psi_2}$. It is worth noting that the input ADDs $\psi_1, \psi_2$ are required to have the same ADD variable orderings i.e. the ordering of variables being represented by internal nodes of ADDs from root to leaf nodes. 
In this work, we use the term \textit{merge} to refer to the usage of {\apply} with $\times$ operator on two ADDs. Referring to the explanation of {\apply}, we say $\psi_3$ represents PB constraints $\{c_1$, $c_2\}$ if $\psi_1$ represents $\{c_1\}$ and $\psi_2$ represents $\{c_2\}$.

\paragraph{Project-Join Tree} \label{app:def:pj-tree}
Let $F$ be a formula. A project-join tree~\cite{DPV20b} of $F$ is a tuple $\mathcal{T}=(T,r,\gamma, \pi)$ where $T$ is a tree with nodes $\mathcal{V}(T)$ and rooted at node $r$. $\gamma$ is a bijection from leaves $\mathcal{L}(T)$ of $T$ to constraints of $F$ and $\pi$ is a labelling function $\mathcal{V}(T) \setminus \mathcal{L}(T) \rightarrow 2^{\var{F}}$ on internal nodes $\mathcal{V}(T) \setminus \mathcal{L}(T)$ of $T$. Additionally, $\mathcal{T}$ satisfies the following:
\begin{enumerate}
    \item The set $\{\pi(n) : n \in \mathcal{V}(T) \setminus \mathcal{L}(T)\}$ partitions \var{F}
    \item For an arbitrary internal node $n$. Let $x$ be a variable in $\pi{n}$, and $c$ be a PB constraint of $F$. If $x \in \var{c}$, then leaf node $\gamma^{-1} (c)$ is a descendant of $n$.
\end{enumerate}

\paragraph{X,Y-Graded Project-Join Tree} \label{app:def:xy-pj-tree}
Let $F$ be a formula with project join tree $\mathcal{T}$, and variable sets $X,Y$ being partitions of \var{F}. The project join tree $\mathcal{T}$ is an $X,Y$ graded project-join tree~\cite{DPM21} if there exist grades $\mathcal{I}_X$ and $\mathcal{I}_Y$ such that:
\begin{enumerate}
    \item The set $\{\mathcal{I}_X , \mathcal{I}_Y\}$ is a partition of internal nodes of $\mathcal{T}$
    \item If node $n_X \in \mathcal{I}_X$, $\pi(n_X) \subseteq X$
    \item If node $n_Y \in \mathcal{I}_Y$, $\pi(n_Y) \subseteq Y$
    \item If $n_X \in \mathcal{I}_X$ and $n_Y \in \mathcal{I}_Y$, $n_X$ is not a descendant of $n_Y$ in tree $T$ rooted at $r$.
\end{enumerate}

\paragraph{Projected Model Counting with Project Join Trees} \label{subsec:pmc-proj-join-tree}
Over a series of works~\cite{DPV20a,DPV20b,DPM21} \citeauthor{DPM21} established a CNF model counting framework with project join trees and extended the framework to projected CNF model counting by employing a specific type of project join tree, namely an $X,Y$-graded project join tree. \citeauthor{DPM21} showed that performing {\orproj} at $\mathcal{I}_{Y}$ nodes and {\addproj} at $\mathcal{I}_{X}$ according to $\mathcal{T}$ produces the correct projected CNF model count~\cite{DPM21}. We show that our projected PB model counting approach shares similarities with the project join tree framework in Section~\ref{subsec:projected-mc}, providing the intuition for algorithm correctness.

\section{Related Work} \label{sec:related-works}

\paragraph{Search-Based Model Counters}
We can classify the numerous existing model counters into two main categories based on their underlying techniques -- search-based model counters and decision diagram-based model counters. Search-based model counters iteratively assign values to variables, implicitly exploring a search tree of variable assignments while keeping count throughout the process. The counters typically employ component caching, so that counts for each branch of the search tree are only computed once. Notable search-based model counters include {\gpmc}, $\mathsf{Ganak}$, and \textsf{Sharpsat-TD}~\cite{SHS17,SRSM19,KJ21}.

\paragraph{Decision Diagram-Based Model Counters}
The use of different types of decision diagrams, which are directed acyclic graph (DAG) representations of the assignment space of a given formula, is common among model counters, for both PB and CNF formulas. Recent decision diagram-based model counters include {\dfour}, {\exactmc}, {\addmc}, and {\dpmc} for CNF model counting and {\pbcount} for PB model counting~\cite{LM17,DPV20a,DPV20b,LMY21,YM24}. {\dfour} and {\exactmc} build their respective decision diagrams in a top-down manner, forming a single diagram representing the assignment space. In contrast, {\addmc}, {\dpmc}, and {\pbcount} build decision diagrams in a bottom-up manner, starting from individual decision diagrams of clauses or constraints and subsequently combining the decision diagrams to represent the assignment space. Our counter, which we term {\pbcountcg}, follows a similar methodology and combines individual decision diagrams in a particular order to support projected model counting while also caching the intermediate diagrams during the process.

\paragraph{Pseudo-Boolean Model Counting}
There has been recent interest in PB model counting. Notable tools developed for PB model counting include the approximate PB model counter {\approxmcpb}~\cite{YM21} and the recently introduced exact PB model counter {\pbcount}~\cite{YM24}. {\approxmcpb} uses hash functions to evenly split the space of satisfying assignments and enumerates a partition of satisfying assignments to obtain an estimated count. On the other hand, {\pbcount} follows the methodology introduced in {\addmc}~\cite{DPV20a} to exactly count the number of satisfying assignments of the input PB formula. However, {\pbcount} does not support projected model counting or incremental counting features. 
In this work, we introduce the projected model counting and incremental counting features to the community via our counter {\pbcountcg}.

\paragraph{Incremental SAT Solving}
There have been numerous works in existing literature regarding incremental settings, more specifically for incremental satisfiability (SAT) solving~\cite{NR12,NRS14,FBS19,N22}. Incremental SAT solving involves finding satisfying assignments to a user-provided Boolean formula, typically in CNF, under certain assumptions that users can incrementally specify. Specifically, users can add and remove CNF clauses to the initially specified CNF formula. Incremental SAT solving is also useful for cases where users have to solve a large number of similar CNF formulas and has led to a wide range of applications such as combinatorial testing, circuit design, and solving string constraints~\cite{YKACOB15,YZLCH17,LGDK23}. In this work, we introduce the concept of incrementality to model counting tasks, by making use of a caching mechanism which we detail in Section~\ref{subsec:inc-counting}.

\section{Approach} \label{sec:approach}

\begin{figure*}[htb]
  \centering
  \begin{adjustbox}{width=0.75\textwidth}
  \begin{tikzpicture}[
  node distance=2cm,
  diamondnode/.style={diamond, minimum width=1.5cm, minimum height=0.5cm, text centered, draw=black, fill=green!30},
  rectnode/.style={rectangle, rounded corners, minimum width=1.5cm, minimum height=0.5cm,text centered, draw=black, fill=red!30},
  rectnodenofill/.style={rectangle, rounded corners, minimum width=1.5cm, minimum height=0.5cm,text centered, draw=black},
  ]
  \node (preprocess-compile) [rectnodenofill, text width=2.4cm] at (0,0) {Preprocess \& compile into \\ individual ADDs \\ (lines~\ref{alg-line:pbcount-greedy-mc:preprocess}-\ref{alg-line:pbcount-greedy-mc:compile-add-end})};
  \node (or-project) [rectnode, text width=2.4cm] at (3, 0) {Merge ADDs \& {\orproj} of non-projection set variables \\ (lines~\ref{alg-line:pbcount-greedy-mc:non-projection-start}-\ref{alg-line:pbcount-greedy-mc:cache1})};
  \node (add-project) [rectnode, text width=2.3cm] at (6, 0) {Merge ADDs \& {\addproj} of projection set variables \\ (lines~\ref{alg-line:pbcount-greedy-mc:projection-start}-\ref{alg-line:pbcount-greedy-mc:cache2})};

  \node (pbformula) at (-3, 0) {PB formula $F$};
  \node (output-count) at (9, 0) {Model Count};

  \node (approach name) at (3.1, -1.5) {Projected model counting flow for {\pbcountcg}};

  \draw [->] (pbformula) -- (preprocess-compile);
  \draw [->] (preprocess-compile) -- (or-project);
  \draw [->] (or-project) -- (add-project);
  \draw [->] (add-project) -- (output-count);

  \draw[dashed] (-1.5, 1.35) rectangle (7.5,-1.7);

  \end{tikzpicture}
  \end{adjustbox}
  \caption{Overall flow of our projected model counter {\pbcountcg}. Shaded boxes indicate our contributions and white box indicates techniques adapted from existing works. Line numbers correspond to lines in Algorithm~\ref{alg:pbcount-greedy-mc}.}
  \label{fig:projectmc-flow}
\end{figure*}
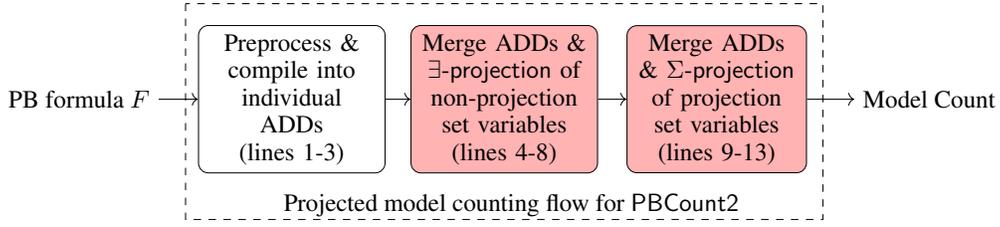

In this section, we detail our {\greedymergetext} ({\greedymergeshort}) computation ordering heuristic as well as the implementations of {\pbcountcg} to support a) projected model counting and b) incremental model counting.

\subsection{{\greedymergetext}} 

The general methodology of performing PB model counting tasks with ADDs involves representing each constraint with its individual ADD and subsequently merging the ADDs and projecting away variables to produce the final model count. As implemented in existing work~\cite{YM24}, the ordering of merging ADDs and projecting away variables is determined by heuristics. However, the existing heuristics did not support projected model counting. To this end, we introduce a new ordering heuristic in {\pbcounttwo}, which we term the {\greedymergeit} heuristic ({\greedymergeshort} in short). 

\begin{algorithm}
  \caption{{\pbcountcg} model counting with {\greedymergeshort} heuristic}\label{alg:pbcount-greedy-mc}
  
  \Input{PB formula $F$, Projection set $X$, Non-projection set $Y$}
  \Output{Model count}
  
  $F \gets \mathsf{preprocess}(F)$\; \label{alg-line:pbcount-greedy-mc:preprocess}
  \For{\upshape all constraints $c$ of $F$}{
    Compile ADD for $c$\; \label{alg-line:pbcount-greedy-mc:compile-add-end}
  }
  \While{$Y \neq \emptyset$}{ \label{alg-line:pbcount-greedy-mc:non-projection-start}
    $\psi \gets \mathsf{ADD(1)}$ \Comment*[r]{$\mathsf{ADD(1)}$ returns ADD representing 1}
    $y \gets \mathsf{popNextVar}(Y)$\; \label{alg-line:pbcount-greedy-mc:greedy1}
    \lForEach{\upshape ADD $\varphi$ containing variable $y$} {
      $\psi \gets \psi \times \varphi$ \label{alg-line:pbcount-greedy-mc:nps-merge}
    }
    $\psi \gets \psi[y \mapsto 1] \vee \psi[y \mapsto 0]$ \label{alg-line:pbcount-greedy-mc:cache1} \Comment*[r]{{\orproj}}
  }
  \While{$X \neq \emptyset$}{ \label{alg-line:pbcount-greedy-mc:projection-start}
    $\psi \gets \mathsf{ADD(1)}$\;
    $x \gets \mathsf{popNextVar}(X)$\;\label{alg-line:pbcount-greedy-mc:greedy2}
    \lForEach{\upshape ADD $\varphi$ containing variable $x$} {
      $\psi \gets \psi \times \varphi$ \label{alg-line:pbcount-greedy-mc:ps-merge}
    }
    $\psi \gets \psi[x \mapsto 1] + \psi[x \mapsto 0]$ \label{alg-line:pbcount-greedy-mc:cache2} \Comment*[r]{{\addproj}}
  }
  $\psi \gets \mathsf{ADD(1)}$\;
  \lForEach{\upshape remaining intermediate ADD $\varphi$} {
    $\psi \gets \psi \times \varphi$ \label{alg-line:pbcount-greedy-mc:final-merge}
  }
  \Return $\psi\mathsf{.value}$\;
\end{algorithm}

Our {\greedymergeit} heuristic is as follows. Let $G$ be an undirected bipartite graph, where the vertices either represent variables in PB formula $F$ or a single PB constraint of $F$. A variable vertex $v_x$ is connected to a constraint vertex $v_c$ if the variable appears in that PB constraint. The {\greedymergeshort} heuristic entails picking the variable that has the corresponding variable vertex in $G$ with the minimum degree. The heuristic is equivalent to a weighted version of the min-degree heuristic on Gaifman graphs, where weights correspond to the number of PB constraints in which two variables appear together.

The intuition behind {\greedymergeshort} stems from the observation that the algorithmic complexity of merging two ADDs of size $m, n$ is on the order of $\mathcal{O}(mn)$. As such, we would like to reduce the size of operand ADDs as much as possible, especially when the overall model counting algorithm involves many such ADD merging operations. In the computation process, the size of an ADD reduces when a variable is projected away. To ensure correctness, a variable can only be projected away when all ADDs involving it have been merged~\cite{DPV20a}. Hence, we designed our {\greedymergeshort} heuristic to pick the least frequently occurring variable to project away, as it involves merging the fewest number of ADDs before projecting away the variable.

\subsection{Projected Model Counting} \label{subsec:projected-mc}

Recall that in projected model counting, there are two non-overlapping sets of variables $X, Y$ where $X$ is the projection set and $Y$ is the non-projection set. The key idea to support projected model counting is the different way variables in $X$ and $Y$ are projected away. For all variables $x \in X$, we project $x$ away from an ADD $\psi$ using $\psi \gets \psi[x \mapsto 1] + \psi[x \mapsto 0]$, also referred to as {\addproj}~\cite{DPM21}. In contrast, for all variables $y \in Y$, we project away $y$ from $\psi$ using $\psi \gets \psi[y \mapsto 1] \vee \psi[y \mapsto 0]$, also referred to as {\orproj}~\cite{DPM21}. In addition, all variables in $Y$ must be projected away before any variable in $X$ because the different projection operations are not commutative across variables in $X$ and $Y$~\cite{DPM21}. To this end, we introduce the {\greedymergeshort} ordering which is compatible with the projected model counting ordering requirements. The overall flow of {\pbcountcg} is shown in Figure~\ref{fig:projectmc-flow}, and the pseudocode is shown in Algorithm~\ref{alg:pbcount-greedy-mc}.

In Algorithm~\ref{alg:pbcount-greedy-mc}, we employ the same preprocessing (line~\ref{alg-line:pbcount-greedy-mc:preprocess}) and individual constraint compilation techniques (line~\ref{alg-line:pbcount-greedy-mc:compile-add-end}) as the original {\pbcount}. Next, we process each variable $y$ in non-projection set $Y$ by merging all ADDs containing $y$ and projecting away $y$ from the merged ADD (lines~\ref{alg-line:pbcount-greedy-mc:non-projection-start}-\ref{alg-line:pbcount-greedy-mc:cache1}). In lines~\ref{alg-line:pbcount-greedy-mc:projection-start}-\ref{alg-line:pbcount-greedy-mc:cache2}, we do the same for variables in projection set $X$. In each iteration of the merge and project process, we select a variable using our {\greedymergeshort} heuristic, indicated by $\mathsf{popNextVar}(\cdot)$ on lines~\ref{alg-line:pbcount-greedy-mc:greedy1} and~\ref{alg-line:pbcount-greedy-mc:greedy2}, and remove it from $X$ or $Y$ respectively. As discussed previously, the {\greedymergeshort} ordering heuristic entails picking the variable that has the least occurrence in the ADDs at that moment of the computation. 

\paragraph*{Algorithm Correctness}

The algorithm correctness of projected model counting of {\pbcounttwo} follows prior work on projected CNF model counting with ADDs~\cite{DPM21}. 
\citeauthor{DPM21} showed that for projected CNF model counting correctness, the computations should be performed according to an $X,Y$-graded project join tree. In particular, performing $\orproj$ at $\mathcal{I}_{Y}$ nodes and $\addproj$ at $\mathcal{I}_{X}$ nodes of an $X,Y$-graded project join tree $\mathcal{T}$ produces the correct projected model count. {\pbcounttwo}'s algorithm correctness for projected PB model counting comes from the fact that the computation in Algorithm~\ref{alg:pbcount-greedy-mc} with {\greedymergeshort} heuristic implicitly follows an $X,Y$-graded project join tree, and therefore produces the correct count.

\begin{theorem} \label{theorem:count-correctness}
  Let $F$ be a formula defined over $X \cup Y$ such that $X$ is the projection set, and $Y$ is the set of variables not in projection set, then given an instance $(F,X,Y)$, Algorithm ~\ref{alg:pbcount-greedy-mc} returns $c$ such that $c = \sum_{\beta \in 2^X} (\max_{\alpha \in 2^Y} [F](\alpha \cup \beta))$
\end{theorem}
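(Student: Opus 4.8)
The plan is to reduce correctness to the established result of \citeauthor{DPM21} by exhibiting an $X,Y$-graded project-join tree $\mathcal{T}$ that the execution of Algorithm~\ref{alg:pbcount-greedy-mc} implicitly realizes, and then observing that the ADD operations the algorithm performs are exactly the evaluation of $\mathcal{T}$ with \orproj{} at the $\mathcal{I}_Y$ grade and \addproj{} at the $\mathcal{I}_X$ grade. First I would define $\mathcal{T}$ directly from the execution trace: its leaves are the individual constraint ADDs compiled on line~\ref{alg-line:pbcount-greedy-mc:compile-add-end}, with $\gamma$ the natural bijection to the constraints of $F$; for each variable popped on line~\ref{alg-line:pbcount-greedy-mc:greedy1} or~\ref{alg-line:pbcount-greedy-mc:greedy2} I create one internal node whose children are exactly the ADDs containing that variable at the moment it is processed (the operands merged on line~\ref{alg-line:pbcount-greedy-mc:nps-merge} or~\ref{alg-line:pbcount-greedy-mc:ps-merge}), labelled by the singleton $\pi(n)=\{y\}$ or $\pi(n)=\{x\}$; the terminal merge on line~\ref{alg-line:pbcount-greedy-mc:final-merge} becomes the root $r$ with empty label. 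Because every intermediate ADD is consumed as an operand by exactly one later node, each node has a unique parent, so $\mathcal{T}$ is a genuine tree.

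Second, I would verify the project-join tree axioms. The partition axiom follows because each variable of $\var{F}=X\cup Y$ is popped exactly once across the two loops, so it appears in exactly one label. The coverage axiom (the descendant condition) rests on the invariant that a variable $x$ remains present in every living ADD derived from a constraint $c$ with $x\in\var{c}$ until the iteration that projects $x$ away; since that iteration merges \emph{all} ADDs still containing $x$, every leaf $\gamma^{-1}(c)$ with $x\in\var{c}$ lies in the subtree rooted at the node projecting $x$. I would prove this invariant by induction on the merge/project steps, using that \apply{} with $\times$ preserves the occurrence of a variable and that projection is the only operation that removes it.

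Third, I would establish the grading. Setting $\mathcal{I}_Y$ to the nodes created in the first \textbf{while} loop and $\mathcal{I}_X$ to those created in the second loop (together with $r$) gives a partition of the internal nodes with $\pi(n)\subseteq Y$ on $\mathcal{I}_Y$ and $\pi(n)\subseteq X$ on $\mathcal{I}_X$. The decisive axiom is that no $\mathcal{I}_X$ node is a descendant of an $\mathcal{I}_Y$ node: in a bottom-up construction a node is created strictly after all its descendants, yet every $\mathcal{I}_Y$ node is created before every $\mathcal{I}_X$ node because the first loop runs to completion (until $Y=\emptyset$) before the second begins; hence an $\mathcal{I}_X$ node cannot be created before, and so cannot be a descendant of, any $\mathcal{I}_Y$ node. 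This is precisely where the requirement that all of $Y$ be eliminated before any of $X$ is used.

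Finally, I would match the algorithm's arithmetic to the graded evaluation: each leaf ADD represents the $\{0,1\}$-valued indicator of its PB constraint, merging is pointwise multiplication of these functions, the \orproj{} on line~\ref{alg-line:pbcount-greedy-mc:cache1} and the \addproj{} on line~\ref{alg-line:pbcount-greedy-mc:cache2} implement exactly the operations \citeauthor{DPM21} assign to $\mathcal{I}_Y$ and $\mathcal{I}_X$ nodes, and the root merge returns a constant equal to their product. I would note the small point that throughout the first loop every ADD stays $\{0,1\}$-valued (a product of indicators is an indicator), so $\vee$ coincides with $\max$ there, validating the use of \orproj{}; values only grow past $1$ once \addproj{} begins, by which point only summation is applied. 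Invoking the correctness theorem of \citeauthor{DPM21} for $X,Y$-graded project-join trees then yields $c=\sum_{\beta\in 2^X}(\max_{\alpha\in 2^Y}[F](\alpha\cup\beta))$. I expect the main obstacle to be the second step: formalizing the trace-to-tree correspondence and proving the coverage invariant rigorously, since it is what guarantees each variable is eliminated only after all constraints mentioning it have been joined; the grading ordering in the third step, though conceptually central, follows cleanly once the bottom-up creation order is pinned down.
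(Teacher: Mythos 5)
Your proposal is correct and follows essentially the same route as the paper: construct the $X,Y$-graded project-join tree from the execution trace of Algorithm~\ref{alg:pbcount-greedy-mc}, verify the partition, coverage, and grading axioms (the grading following from the fact that the $Y$-loop completes before the $X$-loop begins), and then appeal to the graded project-join-tree correctness result of \citet{DPM21}. The only difference is that the paper additionally reproduces the valuation lemmas of \citet{DPM21} (its Lemmas~\ref{app:lemma1} and~\ref{app:lemma2}) rather than citing them as a black box, while your remark that all ADDs remain $\{0,1\}$-valued during the first loop (so $\vee$ coincides with $\max$) is a small point the paper leaves implicit.
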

\begin{proof}
  The proof is deferred to the Appendix. 
\end{proof}

An $X,Y$-graded project join tree $\mathcal{T}$ has two sets of disjoint variables, in Algorithm~\ref{alg:pbcount-greedy-mc} this corresponds to variables in non-projection set $Y$ and projection set $X$. The initial individual ADDs produced at line~\ref{alg-line:pbcount-greedy-mc:compile-add-end} of Algorithm~\ref{alg:pbcount-greedy-mc} each corresponds to a leaf node in $\mathcal{T}$. Each of the intermediate ADDs during the merge and project iterations of variables in non-projection set $Y$ (lines~\ref{alg-line:pbcount-greedy-mc:non-projection-start} to~\ref{alg-line:pbcount-greedy-mc:cache1}) corresponds to an internal node of $\mathcal{T}$, in grade $\mathcal{I}_{Y}$. Similarly, each intermediate ADD during the merge and project process of projection set variables $X$ (lines~\ref{alg-line:pbcount-greedy-mc:projection-start} to~\ref{alg-line:pbcount-greedy-mc:cache2}) corresponds to an internal node of $\mathcal{T}$ in grade $\mathcal{I}_{X}$. Realize that no nodes of $\mathcal{T}$ in $\mathcal{I}_{X}$ are descendants of any node in $\mathcal{I}_{Y}$, satisfying the definition of $X,Y$-graded project join tree. In addition, we are performing {\orproj} at $\mathcal{I}_{Y}$ nodes and {\addproj} at $\mathcal{I}_{X}$ nodes. As such the computation process in Algorithm~\ref{alg:pbcount-greedy-mc} can be cast under the graded project join tree framework and would therefore follow the same proof as prior work~\cite{DPM21}.

\subsection{Incremental Counting} \label{subsec:inc-counting}

\begin{algorithm}
  \caption{Cache retrieval of {\pbcountcg}}\label{alg:inc-cache-retrieval}
  
  \Input{PB formula $F'$}
  \Output{Compute state - set of ADDs retrieved from cache}
  
  $C' \gets \mathsf{getConstraintSet}(F')$\;
  $\mathcal{A}, \mathcal{B} \gets \emptyset$ \Comment*[r]{Initialize 2 empty sets}
  \For{\upshape each ADD $\psi$ in cache} {
    \lIf{\upshape $\psi\mathsf{.constraints}$ $\subset C'$ and $\mathsf{CheckNoExtraVar}(\psi, C')$}{insert $\psi$ into $\mathcal{A}$} \label{alg-line:inc-cache-retrieval:formula-check}
  }
  \For{\upshape each ADD $\psi_\mathcal{A} \in \mathcal{A}$}{ \label{alg-line:inc-cache-retrieval:pairwise-check-start}
    $\mathsf{conflicts} \gets false$\; 
    \For{\upshape each ADD $\psi_\mathcal{B} \in \mathcal{B}$} {
      \If{\upshape there exists $x$ which is projected away in $\psi_\mathcal{A}$ and not $\psi_\mathcal{B}$, or vice versa} {
        $\mathsf{conflicts} \gets true$\;
      }
    }
    \lIf{\upshape not $\mathsf{conflicts}$}{insert $\psi_\mathcal{A}$ into $\mathcal{B}$} \label{alg-line:inc-cache-retrieval:pairwise-check-end}
  }
  \For{\upshape all $c \in C'$}{
    \lIf{\upshape $c$ not represented in $\mathcal{B}$}{insert constraint ADD of $c$ into $\mathcal{B}$}
  }
  \Return{$\mathcal{B}$}\;
\end{algorithm}

{\pbcountcg} supports incremental PB model counting via the caching of intermediate ADDs during the handling of model count queries. 
In particular, {\pbcountcg} supports the removal and addition of constraints in the PB formula. 
With reference to Algorithm~\ref{alg:pbcount-greedy-mc}, we cache the ADDs $\psi$ at lines~\ref{alg-line:pbcount-greedy-mc:cache1} and~\ref{alg-line:pbcount-greedy-mc:cache2} respectively. In order to store the compute state associated with an ADD, for cache retrieval purposes, we store 3 pieces of information: 1) the set of constraints in PB formula that the ADD represents 2) the projection set variables of ADD that have been projected away and 3) the non-projection set variables of ADD that have been projected away.

The cache retrieval mechanism is shown in Algorithm~\ref{alg:inc-cache-retrieval}. When given PB formula $F'$, modified from $F$ by adding or removing constraints, the core idea is to loop through the ADDs in the cache and retrieve the compatible ADDs, replacing the initial ADDs at line~\ref{alg-line:pbcount-greedy-mc:compile-add-end} of Algorithm~\ref{alg:pbcount-greedy-mc}. An ADD $\psi$ is compatible with $F'$ if the set of constraints that $\psi$ represents is a subset of the constraints of $F'$. In addition, the variables that have been projected out from $\psi$ must not appear in any constraint of $F'$ that $\psi$ does not represent, this is handled by $\mathsf{CheckNoExtraVar}(\cdot)$ in Algorithm~\ref{alg:inc-cache-retrieval} line~\ref{alg-line:inc-cache-retrieval:formula-check}. 
Subsequently, from lines~\ref{alg-line:inc-cache-retrieval:pairwise-check-start}-\ref{alg-line:inc-cache-retrieval:pairwise-check-end}, we verify that each ADD that we retrieved is compatible with all other already retrieved ADD candidates in $\mathcal{B}$. Finally, for all constraints that are not represented by an ADD in $\mathcal{B}$, we insert an ADD representing each constraint into $\mathcal{B}$. Cache retrieval replaces lines~\ref{alg-line:pbcount-greedy-mc:preprocess} to~\ref{alg-line:pbcount-greedy-mc:compile-add-end} of Algorithm~\ref{alg:pbcount-greedy-mc}. 
It is worth noting that caching ADDs requires us to disable preprocessing currently, as there is a need to maintain a unique id for each constraint and also a fixed variable-to-constraint relation. The restriction arises from the fact that we have to maintain ADD to variable mapping for ADDs in the cache to perform retrieval compatibility checks in Algorithm~\ref{alg:inc-cache-retrieval}. Preprocessing might remove variables and modify constraints, thus invalidating cached ADDs. As such, preprocessing is disabled when handling incremental counting.

\section{Experiments} \label{sec:experiments}

\begin{figure*}[htb]
    \centering
    \begin{subfigure}{0.33\textwidth}
        \includegraphics[width=\linewidth]{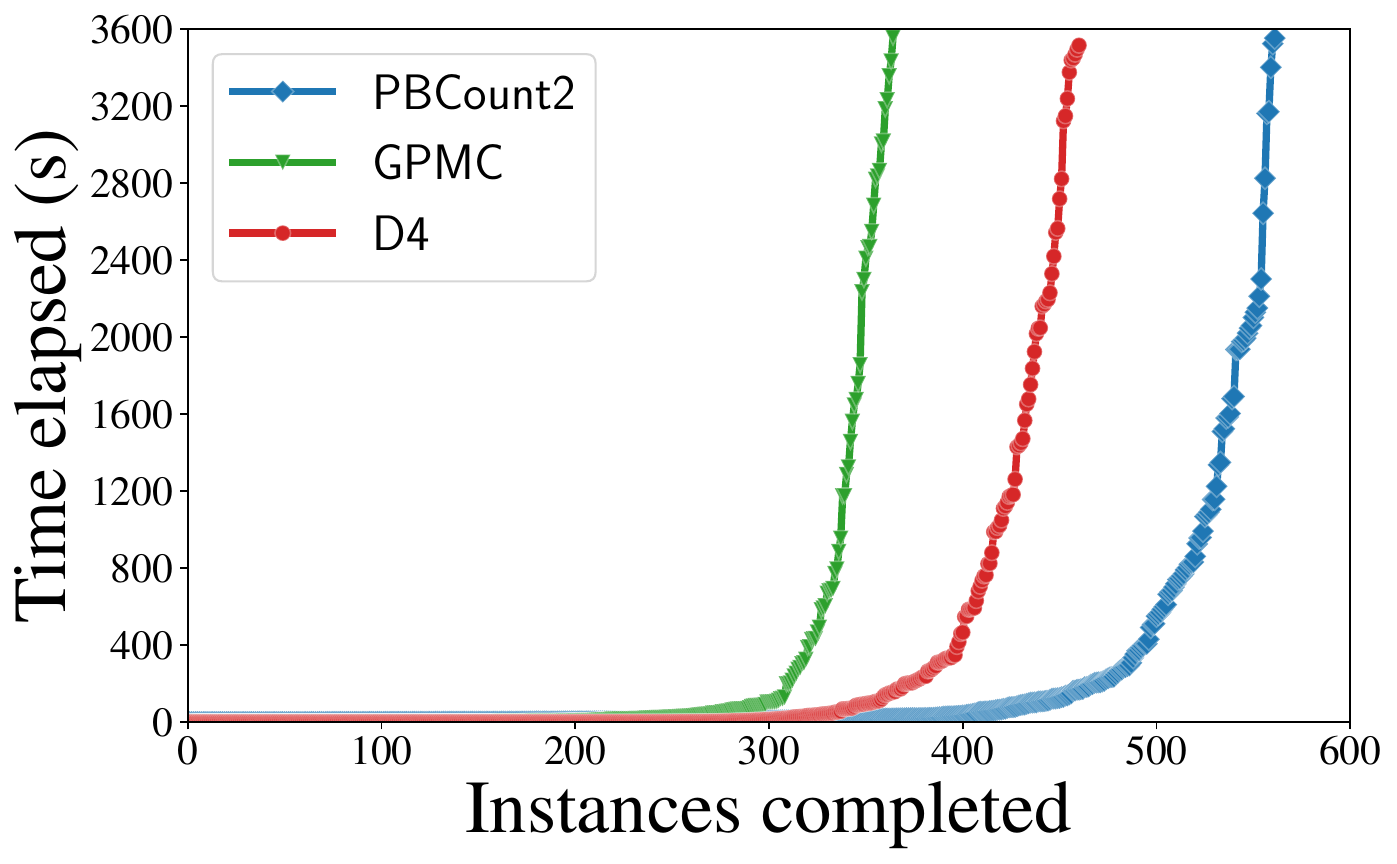}
        \caption{Auction}
    \end{subfigure}%
    \hfill
    \begin{subfigure}{0.33\textwidth}
        \includegraphics[width=\linewidth]{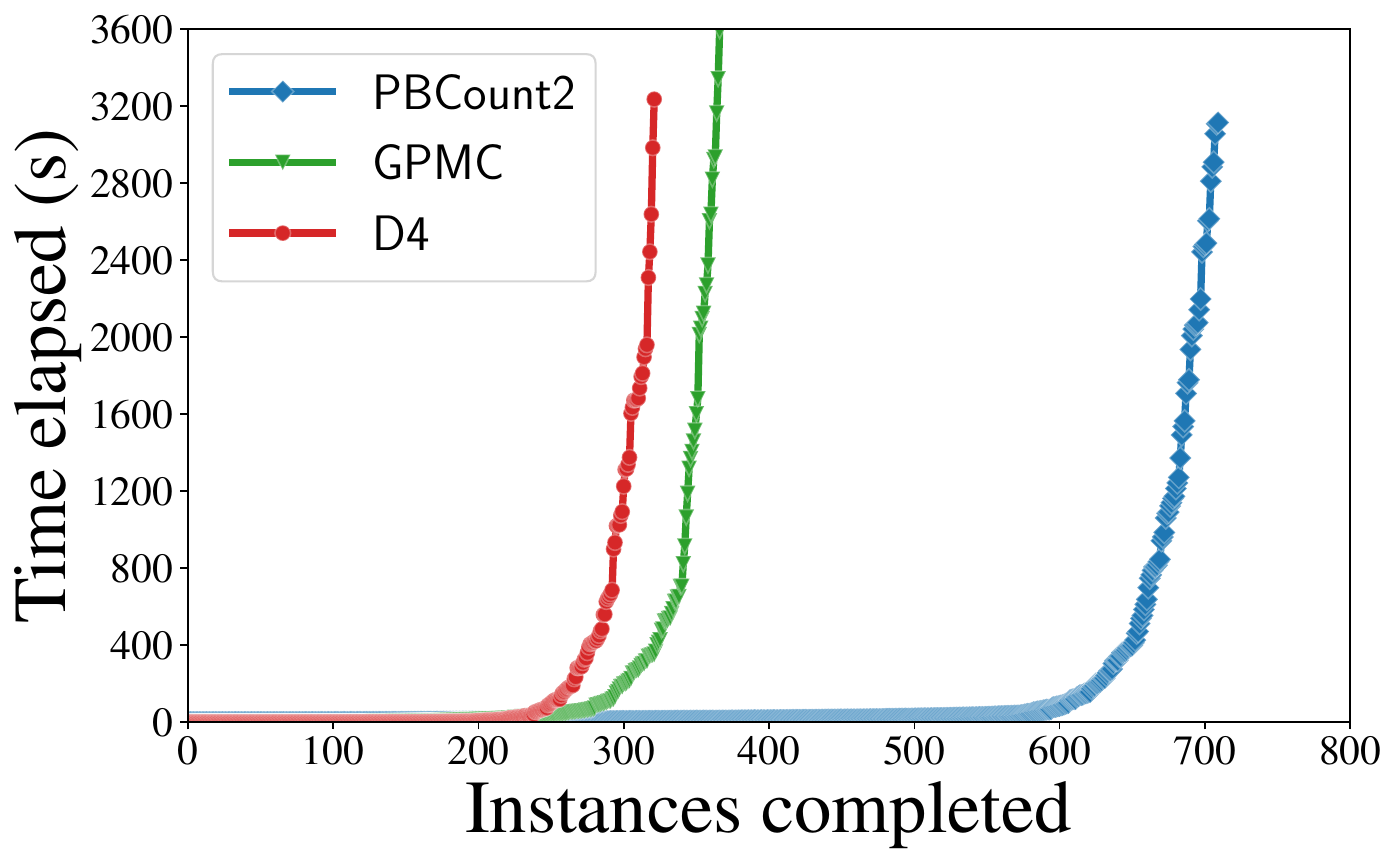}
        \caption{$\mathcal{M}$-dim Knapsack}
    \end{subfigure}%
    \hfill
    \begin{subfigure}{0.33\textwidth}
        \includegraphics[width=\linewidth]{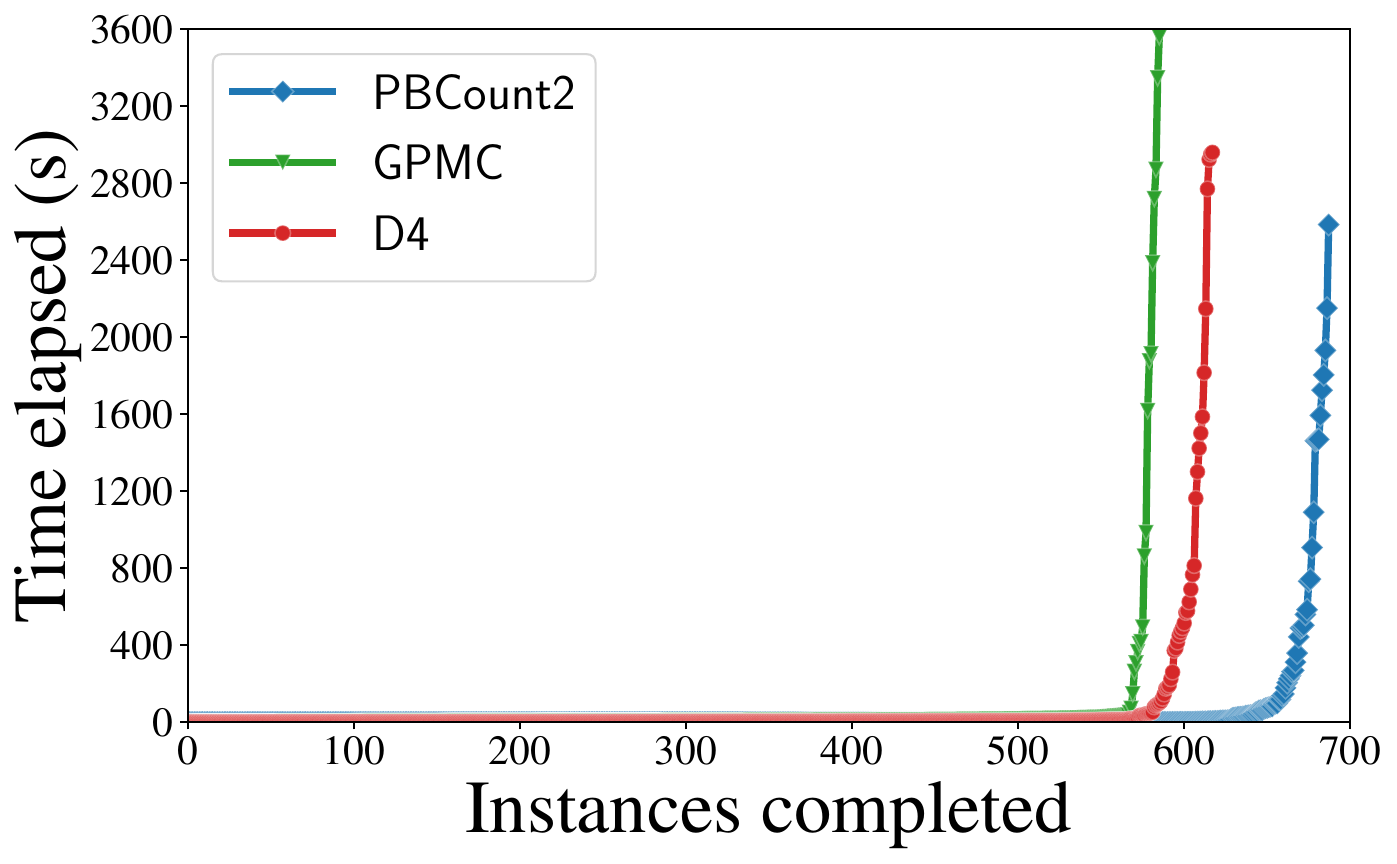}
        \caption{Sensor placement}
    \end{subfigure}
    \caption{Runtime cactus plots of {\pbcountcg} and competing methods for each benchmark set, for projected model counting.}
    \label{fig:individual-cactus-plot-projmc}
\end{figure*}

In this section, we detail the extensive evaluations conducted to investigate the performance of {\pbcountcg}'s new features, namely projected PB model counting and incremental PB model counting. Specifically, we investigate the following:

\begin{description}
    \item[RQ 1] How does {\pbcountcg} perform on projected PB model counting?
    \item[RQ 2] How does {\pbcounttwo} perform under incremental settings in comparison to the state-of-the-art {\pbcount} and CNF counters?
\end{description}

\subsection{Experiment Setup}
In our experiments, we derived synthetic benchmarks from prior work~\cite{YM24}, which has 3 benchmark sets - \textit{auction}, \textit{multi-dimension knapsack}, and \textit{sensor placement}. We defer the benchmark statistics to the appendix. The benchmarks are as follows:
\begin{description}
    \item[RQ 1 Benchmarks: ] Projected benchmarks are adapted from prior work~\cite{YM24}, with the projection set being randomly selected from the original variables. The CNF counters use benchmarks converted using {\pblib}~\cite{PS15}, with the same projection set.
    \item[RQ 2 Benchmarks: ] Incremental benchmarks are also adapted from prior work~\cite{YM24}. Each benchmark involves computing the model count for a given PB formula (step 1) and each of the 2 (4) subsequent modification steps for 3-step (5-step) configurations. Each modification step involves modifying an existing constraint. The modifications for the auction benchmarks correspond to updates to utility values, i.e. when preferences change. The modifications for the knapsack benchmarks correspond to changes to constraints of a particular dimension. Finally, the modifications to the sensor placement benchmarks correspond to additional redundancy requirements for sensors at important locations in the graph. We provided each step of the incremental benchmarks as separate instances to competing approaches, as none supported incremental counting.
\end{description}

In the experiments, we ran each benchmark instance using 1 core of an AMD EPYC 7713 processor, 16GB memory, and a timeout of 3600 seconds. We implemented {\pbcountcg} in C++, using CUDD library~\cite{S15} with double precision and the same ADD variable ordering (\textit{MCS}) as {\pbcount} and {\addmc}~\cite{DPV20a,YM24}. We compared {\pbcountcg} against state-of-the-art projected CNF model counters {\dfour} and {\gpmc}\footnote{Winners of PMC track of Model Counting Competition 2023} for \textbf{RQ1}, with the help of PB to CNF conversion tool {\pblib}. It is worth noting that the converted CNF instances have to be projected onto the original set of PB variables, as auxilary variables are introduced in the conversion process. For incremental benchmarks of \textbf{RQ2}, we compared {\pbcountcg} against {\pbcount} as well as CNF counters {\dfour} and {\gpmc}.

\subsection{RQ 1: Projected PB Model Counting Performance}

\begin{table}[h!]
    \centering
    \begin{small}
    \begin{NiceTabular}{l|r|r|r}
    \toprule
    Benchmarks                      & {\dfour}      & {\gpmc}       & {\pbcountcg}        \\
    \midrule
    Auction                         & 460           & 364           & \textbf{561}        \\
    $\mathcal{M}$-dim knapsack      & 321           & 366           & \textbf{709}        \\
    Sensor placement                & 617           & 585           & \textbf{687}        \\
    \midrule
    Total                           & 1398          & 1315          & \textbf{1957}       \\
    \bottomrule
    \end{NiceTabular}
    \end{small}
    \caption{Number of projected benchmark instances completed by {\dfour}, {\gpmc} and {\pbcountcg} in 3600s. A higher number is better.}
    \label{tab:benchmark-projected-heuristic}
\end{table}

\begin{figure*}[htb]
    \centering
    \begin{subfigure}{0.33\textwidth}
        \includegraphics[width=\linewidth]{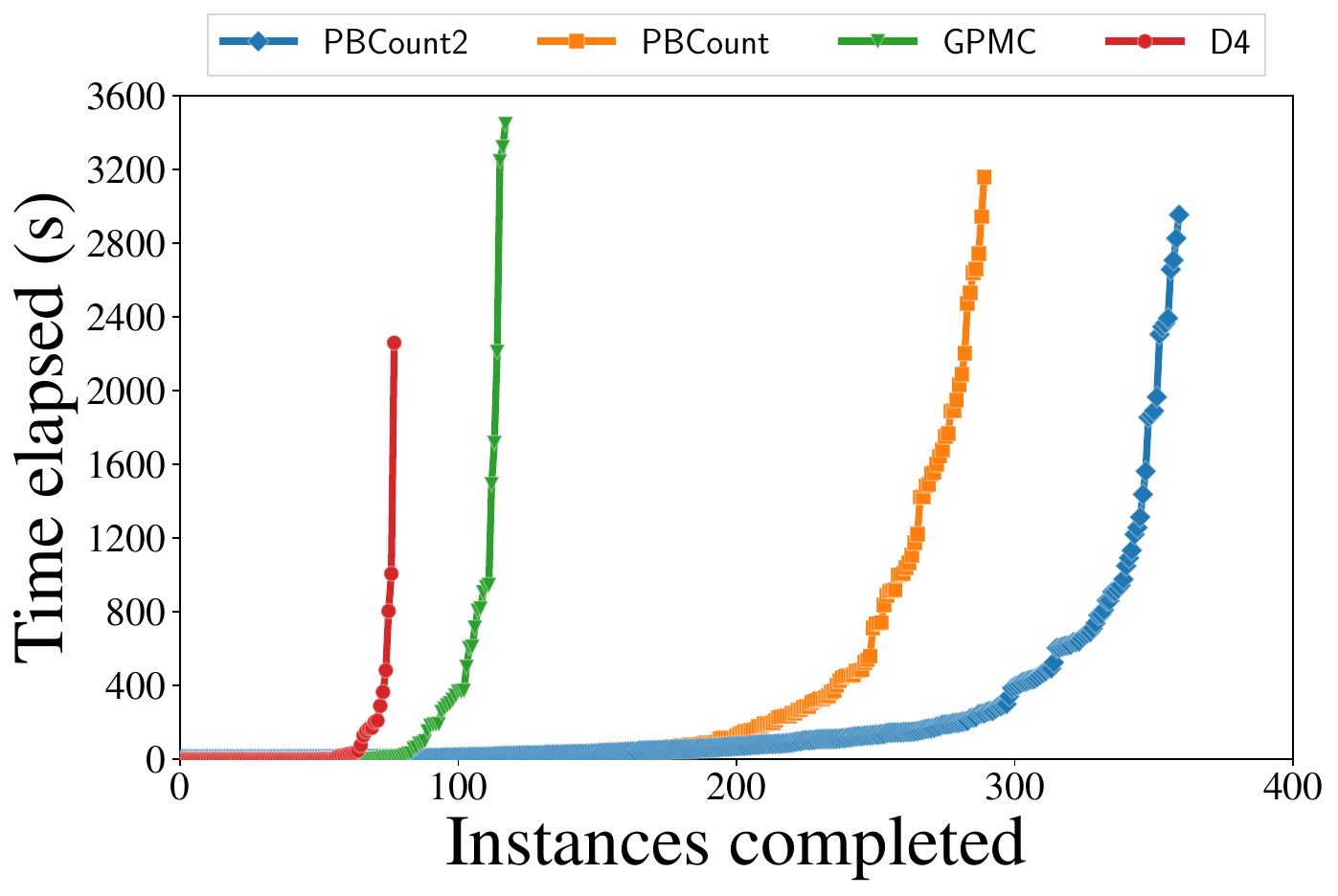}
        \caption{Auction}
    \end{subfigure}%
    \hfill
    \begin{subfigure}{0.33\textwidth}
        \includegraphics[width=\linewidth]{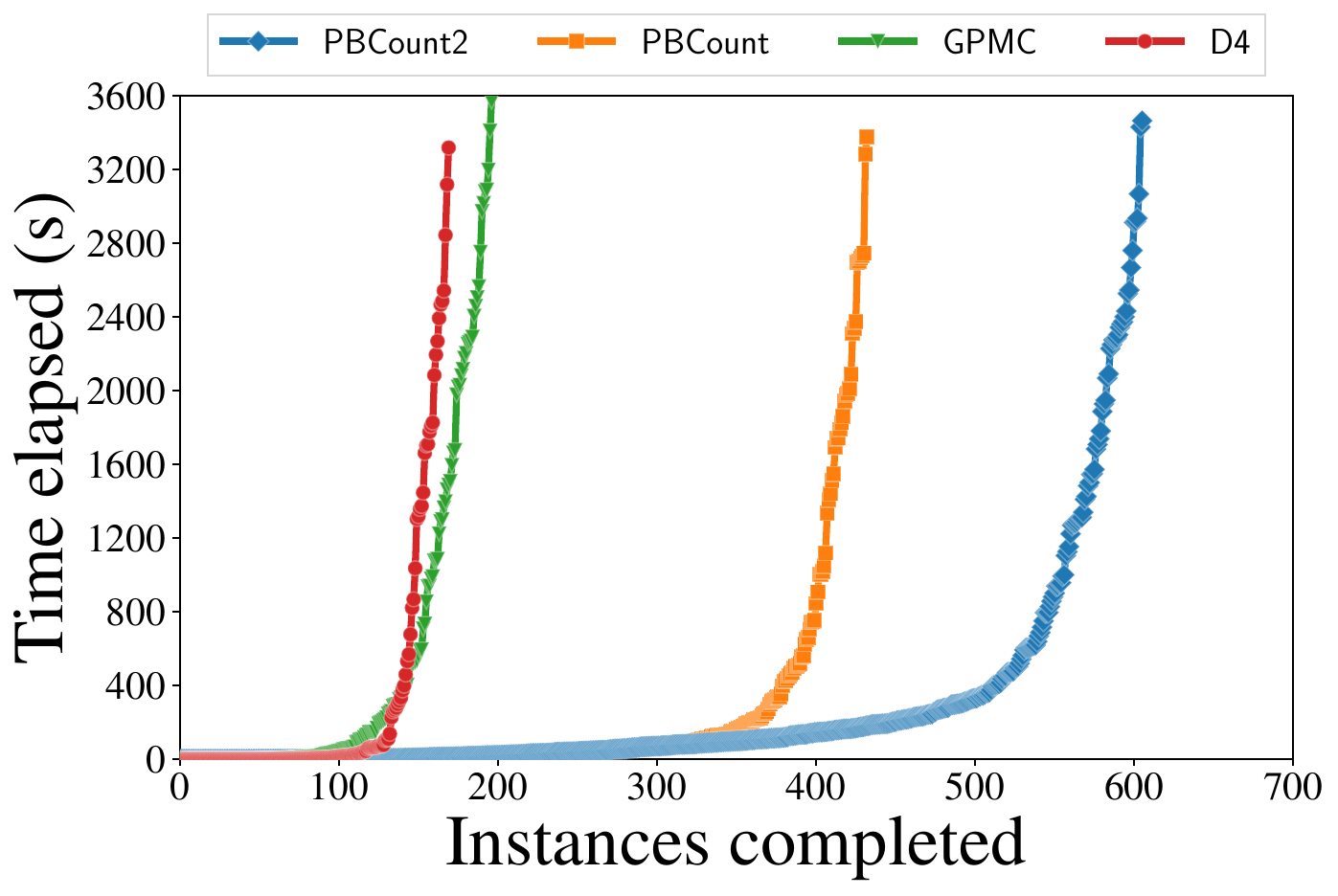}
        \caption{$\mathcal{M}$-dim Knapsack}
    \end{subfigure}%
    \hfill
    \begin{subfigure}{0.33\textwidth}
        \includegraphics[width=\linewidth]{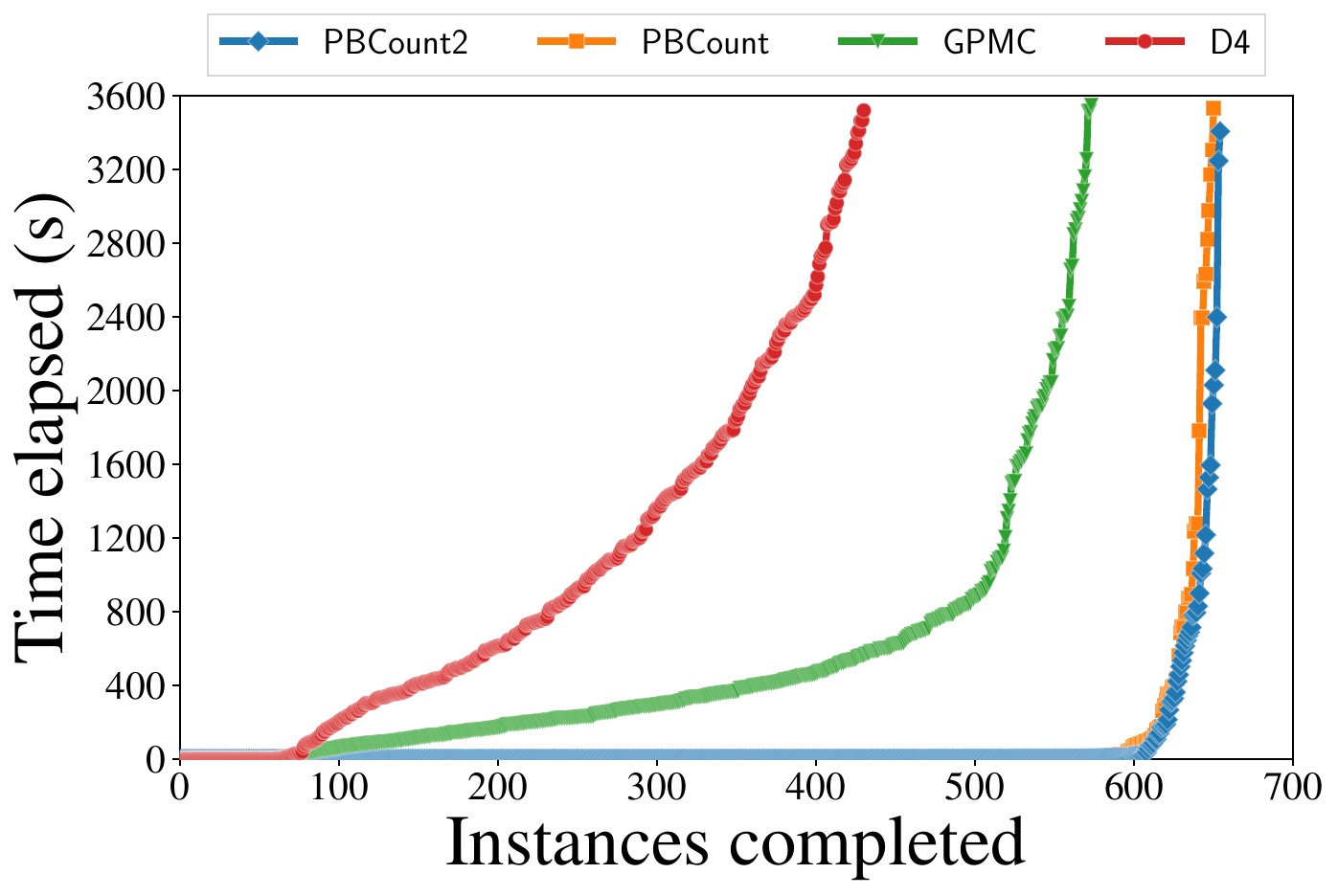}
        \caption{Sensor placement}
    \end{subfigure}
    \caption{Runtime cactus plots of {\pbcountcg} and competing methods for each benchmark set, for incremental counting with 5 steps.}
    \label{fig:5step-incremental-cactus}
\end{figure*}

\begin{table*}[h!]
    \centering
    \begin{small}
    \begin{NiceTabular}{llr|r|r|r}
    \toprule
    Experiment          & Counter                   & Auction           & $\mathcal{M}$-dim knapsack    & Sensor placement  & Total \\
    \midrule
    \Block{4-1}{3-step} & {\dfour}                  & 85                & 179                           & 536               & 800  \\
                        & {\gpmc}                   & 130               & 216                           & 586               & 932  \\
                        & {\pbcount}                & 312               & 458                           & 657               & 1427 \\
                        & {\pbcounttwo}             & \textbf{369}      & \textbf{629}                  & \textbf{660}      & \textbf{1658} \\
    \midrule
    \Block{4-1}{5-step} & {\dfour}                  & 77                & 169                           & 430               & 676  \\
                        & {\gpmc}                   & 117               & 196                           & 573               & 886  \\
                        & {\pbcount}                & 289               & 432                           & 650               & 1371 \\
                        & {\pbcounttwo}             & \textbf{359}      & \textbf{605}                  & \textbf{654}      & \textbf{1618} \\
    \bottomrule
    \CodeAfter
        \tikz \draw [very thick] (1-|3) |- (10-|3) ; 
    \end{NiceTabular}
    \end{small}
    \caption{Number of incremental benchmark instances completed by {\pbcounttwo} and competing methods in 3600s. `3-step' indicates results of incremental {\pbcounttwo} with 3 counting steps, and `5-step' indicates that with 5 counting steps. A higher number is better.}
    \label{tab:benchmark-incremental-steps-combined}
\end{table*}

We conducted extensive evaluations to understand the performance of {\pbcountcg} compared to state-of-the-art projected CNF model counters {\dfour} and {\gpmc}~\cite{LM17,SHS17}. 
We show the results in Table~\ref{tab:benchmark-projected-heuristic} and cactus plots in Figure~\ref{fig:individual-cactus-plot-projmc}. {\pbcountcg} is able to complete 1957 instances, demonstrating a substantial lead over the 1398 instances of {\dfour} and the 1315 instances of {\gpmc}. Overall, {\pbcountcg} solves around 1.40$\times$ the number of instances of {\dfour} and 1.49$\times$ that of {\gpmc}, highlighting the efficacy of {\pbcounttwo} in projected PB model counting tasks.

\subsection{RQ 2: Incremental Counting Performance}
We conducted experiments to analyze the performance of {\pbcountcg}'s incremental mode against the state-of-the-art {\pbcount} and CNF counters. In our experiments, we looked at the 3-step and 5-step benchmark configurations. The experiments were run with a total timeout of 3600s for the two benchmark configurations. The results are shown in Table~\ref{tab:benchmark-incremental-steps-combined}.

In the 3-step benchmarks, {\pbcounttwo} shows a considerable performance advantage over {\dfour}, {\gpmc}, and {\pbcount}. More specifically, {\pbcounttwo} completed 2.07$\times$ the number of benchmarks completed by {\dfour}, 1.78$\times$ that of {\gpmc}, and 1.16$\times$ that of {\pbcount} respectively. Across all 3 incremental benchmark sets for 3-step experiments, {\pbcounttwo} performs the same as or better than {\pbcount}. As we move to 5-step benchmarks, we see {\pbcounttwo} having a more significant lead over the competing methods. Overall, {\pbcounttwo} completes 2.39$\times$ the number of instances of {\dfour}, 1.83$\times$ that of {\gpmc}, and 1.18$\times$ that of the original {\pbcount}. Notably when moving from 3-step benchmarks to 5-step benchmarks, the drop in the number of completed incremental benchmarks of {\pbcounttwo} is only 40, compared to 56 for {\pbcount}, 46 for {\gpmc}, and 124 for {\dfour}. Additionally, we show the cactus plots of each set of incremental benchmarks under 5-step benchmark configuration in Figure~\ref{fig:5step-incremental-cactus}. The plot further highlights the runtime advantages of {\pbcounttwo} over competing approaches. Overall, the evaluations demonstrate the efficacy of {\pbcounttwo} at incremental PB model counting, as opposed to existing approaches that can only treat each incremental step as a separate model counting instance.

\subsection{Discussion}

Through the extensive evaluations, we highlighted {\pbcounttwo}'s superior performance in both projected PB model counting as well as incremental PB model counting benchmarks. To the best of our knowledge, {\pbcounttwo} is the first counter to support exact projected PB model counting, and also the first counter to support incremental counting. Despite the practical trade-offs required to support incremental counting, such as the lack of preprocessing and overhead of searching the cache, {\pbcounttwo} still demonstrated promising runtime advantages over competing methods. Additionally, {\pbcounttwo} also demonstrated the performance benefits of natively handling PB formulas for projected PB model counting, rather than using the convert-and-count approach with state-of-the-art projected CNF counters.

\section{Conclusion} \label{sec:conclusion}

In this work, we introduced {\pbcountcg}, the first exact PB model counter that supports a) projected model counting and b) incremental model counting, using our {\greedymergeshort} heuristic and caching mechanism. Our evaluations highlight {\pbcountcg}'s superior runtime performance in both projected model counting and incremental counting settings, completing 1.40$\times$ and 1.18$\times$ the number of instances of the best competing approaches respectively. While {\pbcountcg} supports incremental counting, it requires preprocessing techniques to be disabled to maintain coherent cached ADD to constraint and variable mappings. It would be of interest to overcome the preprocessing restriction in future works, perhaps by introducing the preprocessing techniques as inprocessing steps or by storing additional metadata about the preprocessing process. We hope the newly introduced capabilities of projection and incrementality lead to wider adoption and increased interest in PB model counting in general.

\section*{Acknowledgments}
The authors thank the reviewers for providing feedback. This work was supported in part by the Grab-NUS AI Lab, a joint collaboration between GrabTaxi Holdings Pte. Ltd. and National University of Singapore, and the Industrial Postgraduate Program (Grant: S18-1198-IPP-II), funded by the Economic Development Board of Singapore. This work was supported in part by National Research Foundation Singapore under its NRF Fellowship Programme [NRF-NRFFAI1-2019-0004], Ministry of Education Singapore Tier 2 grant [MOE-T2EP20121-0011], Ministry of Education Singapore Tier 1 grant [R-252-000-B59-114], and Natural Sciences and Engineering Research Council of Canada (NSERC) [RGPIN-2024-05956]. The computational work for this article was performed on resources of the National Supercomputing Centre, Singapore.

\bibliography{reference.bib}

\clearpage
\onecolumn
\appendix
\clearpage
\onecolumn

\section{Benchmark Statistics}
We detail the statistics of the different benchmark sets in Table~\ref{app:tab:benchmark-stats}. As shown in the table, sensor placement settings have much more constraints (number of constraints) than auction and multi-dimension knapsack settings.

\begin{table}[h!]
    \centering
    \begin{small}
    \begin{NiceTabular}{l|l|r|r|r|r|r}
    \toprule
    \Block{1-2}{Benchmark stats}                &                   & $25\%$ & $50\%$ & $75\%$ & Avg. & Std. \\
    \midrule
    \Block{2-1}{Auction}                       & \# var            & 60.00  & 91.00  & 131.00 & 98.28  & 51.83 \\
                                                & \# constraint     & 4.00   & 9.00   & 14.00  & 9.32   & 5.58 \\
    \midrule
    \Block{2-1}{$\mathcal{M}$-dim knapsack}     & \# var            & 86.75  & 164.00 & 234.00 & 159.02 & 84.30 \\
                                                & \# constraint     & 6.00   & 10.00  & 15.00  & 10.41  & 5.68 \\
    \midrule
    \Block{2-1}{Sensor placement}               & \# var            & 25.00  & 81.50   & 190.00   & 111.63 & 91.54 \\
                                                & \# constraint     & 291.00 & 1210.00 & 7056.50 & 5561.22 & 8661.58 \\
    \bottomrule
    \end{NiceTabular}
    \end{small}
    \caption{Statistics of the benchmark sets. `25\%' refers to 25th percentile, `50\%' refers to median and `75\%' refers to 75th percentile. `Avg' indicates the average value and `Std.' indicates standard deviation. `\# var' indicates number of variables and `\# constraint' indicates number of constraints.}
    \label{app:tab:benchmark-stats}
\end{table}

\section{Additional Experiments}
We performed additional experiments to understand the impact of preprocessing on incremental model counting, as we had to disable preprocessing in {\pbcounttwo} when handling incremental counting setting. In particular, we compared {\pbcount} with preprocessing switched on and off for incremental model counting in Table~\ref{app:tab:incremental-nopp-pbcount}.

\begin{table*}[h!]
    \centering
    \begin{small}
    \begin{NiceTabular}{llr|r|r|r}
    \toprule
    Experiment          & Counter                   & Auction           & $\mathcal{M}$-dim knapsack    & Sensor placement  & Total \\
    \midrule
    \Block{2-1}{3-step} & {\pbcount}                & 312               & 458                           & 657               & 1427  \\
                        & {\pbcount}-nopp           & 309               & 440                           & 652               & 1401  \\
    \midrule
    \Block{2-1}{5-step} & {\pbcount}                & 289               & 432                           & 650               & 1371  \\
                        & {\pbcount}-nopp           & 289               & 414                           & 644               & 1347  \\
    \bottomrule
    \CodeAfter
        \tikz \draw [very thick] (1-|3) |- (6-|3) ; 
    \end{NiceTabular}
    \end{small}
    \caption{Number of incremental benchmark instances completed by {\pbcount} in 3600s, with and without preprocessing. `{\pbcount}-nopp' indicates when preprocessing is disabled. `3-step' indicates results of {\pbcount} with 3 counting steps, and `5-step' indicates that with 5 counting steps. Each step is treated as a separate instance by {\pbcount}.}
    \label{app:tab:incremental-nopp-pbcount}
\end{table*}

\section{Algorithm Correctness}

For completeness, we provide the necessary notations, definitions, and proofs adapted from prior work~\cite{DPM21} to show the correctness of our projected model counting algorithm, Algorithm~\ref{alg:pbcount-greedy-mc} in the main paper.

\subsection{Definitions}

\begin{definition}[\textbf{Projected Valuation}] \label{app:def:proj-valuation}
    Let $(F,X,Y)$ be a PB model counting instance, with projection set $X$, non-projection set $Y$, and PB formula $F$. Let $\mathcal{T}$ be an $X,Y$ graded project join tree of $F$. The projected-valuation of each node $n \in \mathcal{V}(T)$, denoted $g_{n}$, is defined as:
    \begin{equation*}
        g_{n} = 
        \begin{cases}
            [\gamma(n)] & \text{if $n \in \mathcal{L}(T)$} \\
            \sum_{\pi(n)} ( \prod_{o \in \child{n}} g_{o} ) & \text{if $n \in \mathcal{I}_X$}\\
            \exists_{\pi(n)} (\prod_{o \in \child{n}} g^{W}_{o}) & \text{if $n \in \mathcal{I}_Y$}
        \end{cases}
    \end{equation*}
    Where $[\gamma(n)]$ is the representation of PB constraint $\gamma(n) \in F$ and \child{n} refers to the set of immediate child nodes of $n$ in the project join tree.
\end{definition}

\begin{definition}[\textbf{Early Projection}] \label{app:def:early-project}
    Let X and Y be sets of variables. For all functions $f$ defined over $X$ and $g$ defined over $Y$, $f : 2^X \rightarrow \mathbb{R}$, and $g: 2^Y \rightarrow \mathbb{R}$, if $x \in X \setminus Y$ then $\sum_X (f \cdot g) = (\sum_X f) \cdot g$ and $\exists_X (f \cdot g) = (\exists_X f) \cdot g$
\end{definition}

\subsection{Additional Notations}
We introduce additional notations that are relevant to the proofs that follow. Let $\mathcal{T} = (T, r, \gamma, \pi)$ be a project join tree for PB formula $F$ and $n \in \mathcal{V}(T)$ be a node in $T$. $S(n) \in \mathcal{V}(T)$ denotes the set of all descendants of $n$, including itself. In other words, $S(n)$ is the set of vertices of the subtree rooted at $n$. Let $P(n) = \bigcup_{o \in S(n) \setminus \mathcal{L}(T)} \pi(o)$ be the set of variables projected in $S(n)$. Let $\Phi(n) = \{\gamma(l) : l \in \mathcal{L}(T) \cap S(n)\}$ be the set of PB constraints of $F$ that are mapped to the leaf nodes of $S(n)$.

\subsection{Proof}

\begin{lemma} \label{app:lemma1}
    Let $n$ be an internal node in a project join tree $\mathcal{T} = (T, r, \gamma, \pi)$. Let $o$ and $p$ be in \child{n}, and that $o \not= p$. Then $P(o) \cap \var{\Phi(q)} = \emptyset$.
\end{lemma}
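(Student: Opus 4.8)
The plan is to argue by contradiction, using Property 2 of the project-join tree definition together with the fact that the subtrees hanging off distinct children of a node are vertex-disjoint. (I read the $q$ in the statement as $p$, which is evidently the intended index.) First I would suppose the intersection is nonempty, so that there exists a variable $x \in P(o) \cap \var{\Phi(p)}$, and then derive a contradiction.

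Next I would unpack the two membership facts. Since $x \in P(o) = \bigcup_{m \in S(o) \setminus \mathcal{L}(T)} \pi(m)$, there is an internal node $m \in S(o)$ with $x \in \pi(m)$; that is, $x$ is projected at some node in the subtree rooted at $o$. Since $x \in \var{\Phi(p)}$, there is a constraint $c \in \Phi(p)$ with $x \in \var{c}$, and by the definition of $\Phi(p)$ this constraint is $c = \gamma(l)$ for some leaf $l \in \mathcal{L}(T) \cap S(p)$.

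The key step invokes Property 2 of the project-join tree: because $x \in \pi(m)$ and $x \in \var{c}$, the leaf $\gamma^{-1}(c) = l$ must be a descendant of $m$, hence $l \in S(m) \subseteq S(o)$, where the inclusion holds since $m$ is itself a descendant of $o$. But we also have $l \in S(p)$. Since $o$ and $p$ are distinct elements of $\child{n}$, the subtrees $S(o)$ and $S(p)$ are vertex-disjoint in $T$, so a single leaf $l$ cannot lie in both. This contradiction establishes $P(o) \cap \var{\Phi(p)} = \emptyset$.

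I do not expect a genuine obstacle here: the argument is a short contradiction, and the only point requiring care is tracking the subtree containments, namely that $x \in \pi(m)$ forces every constraint containing $x$ (in particular $c$) to sit below $m$ via Property 2, and that ``below $m$'' implies ``below $o$'' because $m \in S(o)$. The substantive use of this lemma is deferred to the later arguments, where this disjointness underwrites the early-projection manipulations (Definition~\ref{app:def:early-project}) used to relate the projected valuations $g_n$ to the desired count.
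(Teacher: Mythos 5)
Your proof is correct and follows essentially the same route as the paper's: both arguments use Property 2 of the project-join tree to force the leaf of any constraint containing $x \in P(o)$ to lie in the subtree $S(o)$, and then invoke the disjointness of sibling subtrees; you merely phrase it as a contradiction where the paper argues directly. You are also right that the $q$ in the statement is a typo for $p$.
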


\begin{proof}
    Let $x$ be a variable in $P(o)$, then $x \in \pi(s)$ for some internal node $s$ that is a descendant of $o$. Suppose that $x$ appears in some arbitrary PB constraint $c$ of PB formula $F$. By definition of project join tree (main paper), the leaf node $\gamma^{-1}(c)$ must be a descendant of $s$ and thus also a descendant of $o$. As $o$ and $q$ are sibling nodes, variable $x$ does not appear in any descendant leaf node of $q$ and $x \not \in \var{\Psi(q)}$. Thus $P(o) \cap \var{\Phi(q)} = \emptyset$.
\end{proof}

\begin{lemma} \label{app:lemma2}
    Let $(F,X,Y)$ be a weighted projected model counting instance and $\mathcal{T}$ be an $X,Y$ graded project join tree for $F$.
    For every node $n$ of $T$, let
    \begin{equation*}
        h_{n} =  \prod_{C \in \Phi(n)} [C] 
    \end{equation*}
    Then
    \begin{equation} \label{app:eq1}
        g_{n} = \sum_{P(n) \cap X} \bigexists_{P(n) \cap Y} h_{n}
    \end{equation}
\end{lemma}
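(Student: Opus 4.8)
The plan is to prove Equation~\eqref{app:eq1} by structural induction on the project join tree $\mathcal{T}$, working from the leaves up to the root. For the base case, let $n \in \mathcal{L}(T)$ be a leaf. Then $\Phi(n) = \{\gamma(n)\}$, so $h_n = [\gamma(n)] = g_n$ by Definition~\ref{app:def:proj-valuation}, and since no internal node lies in $S(n)$ we have $P(n) = \emptyset$. Consequently $P(n)\cap X = P(n)\cap Y = \emptyset$, both projection operators on the right-hand side are vacuous, and the identity reduces to $g_n = h_n$, which holds.

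For the inductive step, let $n$ be an internal node with children $\child{n}$, and assume the claim holds for every $o \in \child{n}$. I would first record three structural facts. First, because the leaves of $S(n)$ are partitioned among the subtrees rooted at the children, $\Phi(n)$ is the disjoint union of the $\Phi(o)$, so $h_n = \prod_{o \in \child{n}} h_o$. Second, by the partition property of $\{\pi(m)\}$ we have the disjoint decomposition $P(n) = \pi(n) \,\sqcup\, \bigcup_{o} P(o)$. Third, the grading condition that no $\mathcal{I}_X$ node is a descendant of an $\mathcal{I}_Y$ node implies that if $n \in \mathcal{I}_Y$ then every internal node of $S(n)$ lies in $\mathcal{I}_Y$, whence $P(n)\cap X = \emptyset$; this is what lets the lone $\bigexists$ in the $\mathcal{I}_Y$ branch of Definition~\ref{app:def:proj-valuation} match the right-hand side.

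The heart of the argument is combining the child valuations. Writing $A_o = P(o)\cap X$ and $B_o = P(o)\cap Y$, the induction hypothesis gives $g_o = \sum_{A_o}\bigexists_{B_o} h_o$. I would show that
\[
  \prod_{o \in \child{n}} g_o \;=\; \sum_{\bigcup_o A_o}\;\bigexists_{\bigcup_o B_o}\;\prod_{o} h_o .
\]
The tool is Definition~\ref{app:def:early-project} (early projection) together with Lemma~\ref{app:lemma1}: for distinct children $o \neq p$, Lemma~\ref{app:lemma1} gives $P(o)\cap\var{\Phi(p)} = \emptyset$, so every variable in $A_o\cup B_o$ is absent from each sibling factor $h_p$. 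Hence each child's projection operators can be pulled outward across the product one factor at a time by early projection. Applying $\sum_{\pi(n)}$ in the case $n\in\mathcal{I}_X$ (where $\pi(n)\subseteq X$), or $\bigexists_{\pi(n)}$ in the case $n\in\mathcal{I}_Y$ (where $\pi(n)\subseteq Y$), to this product and absorbing $\pi(n)$ into the appropriate union then yields, via the decompositions above and $\prod_o h_o = h_n$, exactly $\sum_{P(n)\cap X}\bigexists_{P(n)\cap Y} h_n$.

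The step I expect to be the main obstacle is legitimizing the reordering of the accumulated $\sum$ and $\bigexists$ operators, since summation and the $\max$-style existential projection do not commute in general. The resolution rests entirely on Lemma~\ref{app:lemma1}: because the variable supports $A_o\cup B_o$ of different children are pairwise disjoint and disjoint from the sibling factors, each $\sum$ and each $\bigexists$ acts on a separate tensor factor, and on separate factors the two kinds of projection do commute. The grading supplies the complementary guarantee that I never have to push a sum inside an existential on entangled variables: within any single $g_o$ the sums already sit outside the existentials, and at an $\mathcal{I}_Y$ node the set $P(n)\cap X$ is empty so no sum is present at all. Assembling these observations closes both cases and completes the induction.
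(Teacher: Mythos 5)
Your proposal is correct and follows essentially the same route as the paper's proof: structural induction on $\mathcal{T}$ with the same leaf base case, the same use of Lemma~\ref{app:lemma1} and early projection (Definition~\ref{app:def:early-project}) to pull the child projections out of the product, and the same case split on $n \in \mathcal{I}_X$ versus $n \in \mathcal{I}_Y$. Your explicit justification of why the accumulated $\sum$ and $\bigexists$ operators may be reordered (disjoint variable supports across siblings) is a welcome elaboration of a step the paper treats more tersely, but it is not a different argument.
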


\begin{proof}
    Let $n$ be an arbitrary node in $\mathcal{V}(T)$.\\

    \textbf{Case 1: } Suppose $n$ is a leaf node, that is $n \in \mathcal{L}(T)$.\\
    If $n$ is a leaf node, then $\Phi(n) = \{\gamma(n)\}$ and $P(n) = \emptyset$. By definition of $h_{n}$, $h_{n} =  \prod_{C \in \{\gamma(n)\}} [C] = [\gamma(n)]$. It follows that $\sum_{P(n) \cap X} \bigexists_{P(n) \cap Y} h_{n}$ becomes $\sum_{\emptyset} \bigexists_{\emptyset} h_{n} = [\gamma(n)]$. Notice that this is exactly $g_{n}$ by Definition~\ref{app:def:proj-valuation}, for the case where $n \in \mathcal{L}(T)$. \\

    \textbf{Case 2: } Now we look at the case where $n$ is an internal node of $T$, that is $n \in \mathcal{V}(T) \setminus \mathcal{L}(T)$. \\
    Suppose for each child node $o \in \child{n}$, $g_{o} = \sum_{P(o) \cap X} \bigexists_{P(o) \cap Y} h_{o}$. Then if we take the valuation product, and substituting Equation~\ref{app:eq1}, we have
    \begin{equation}
        \prod_{o \in \child{n}} g_{o} = \prod_{o \in \child{n}} \sum_{P(o) \cap X} \bigexists_{P(o) \cap Y} h_{o}
    \end{equation}

    Since for $o, q \in \child{n}$ and $o \not= q$ $P(o) \cap \var{\Phi(q)} = \emptyset$ by Lemma~\ref{app:lemma1}, then $P(o) \cap \var{h_{q}} = \emptyset$. As a result, by the applying early projection according to Definition~\ref{app:def:early-project}, we have
    \begin{equation} \label{app:eq3}
        \prod_{o \in \child{n}} g_{o} = \sum_{A \cap X} \prod_{o \in \child{n}} \bigexists_{P(o) \cap Y} h_{o} = \sum_{A \cap X} \bigexists_{A \cap Y} \prod_{o \in \child{n}}  h_{o}
    \end{equation}
    where $A$ is $\bigcup_{o \in \child{n}} P(o)$.

    Recall that the set of internal nodes, $\mathcal{V}(T) \setminus \mathcal{L}(T)$ , of an $\mathcal{T}$ is partitioned by $\mathcal{I}_X$ and $\mathcal{I}_Y$, definition of $X,Y$-graded project join tree (main paper). As such, there are two subcases, where $n$ is $\mathcal{I}_X$ or $\mathcal{I}_Y$.

    \textbf{Case 2a: } $n \in \mathcal{I}_Y$.
    For each $p \in S(n)$, $p \in \mathcal{I}_Y$ and $\pi{p} \subseteq Y$ by definition of $X,Y$-graded project join tree (main paper). As such $\bigcup_{o \in \child{n}} P(o)$ or $A$ is a subset of $Y$. Thus by Definition~\ref{app:def:proj-valuation} and Equation~\ref{app:eq3},
    \begin{equation*}
        g_n = \bigexists_{\pi(n)} \prod_{o \in \child{n}} g_o = \bigexists_{\pi(n)} \bigexists_A \prod_{o \in \child{n}} h_o = \bigexists_{P(n)} \prod_{o \in \child{n}} h_o 
    \end{equation*}
    
    Therefore,
    \begin{equation*}
        g_n = \bigexists_{P(n)} \prod_{o \in \child{n}} \prod_{C \in \Phi(o)} [C] = \bigexists_{P(n)} \prod_{C \in \Phi(n)} [C] = \bigexists_{P(n)} h_n
    \end{equation*}

    \textbf{Case 2b: } $n \in \mathcal{I}_X$.
    Notice that $\pi(n) \subseteq X$. Using Definition~\ref{app:def:proj-valuation} and Equation~\ref{app:eq3}, we have
    \begin{equation*}
        g_n = \sum_{\pi(n)} \prod_{o \in \child{n}} g_o = \sum_{\pi(n)} \left( \sum_{A \cap X} \bigexists_{A \cap Y} \prod_{o\in \child{n}} h_o \right)
    \end{equation*}
    Notice that $\pi(n) \cup A = P(n)$ and also $h_n = \prod_{o\in\child{n}} h_o$, therefore
    \begin{equation*}
        g_n = \sum_{P(n) \cap X} \bigexists_{P(n) \cap Y} \prod_{o\in\child{n}} h_o = \sum_{P(n) \cap X} \bigexists_{P(n) \cap Y} h_n
    \end{equation*}
\end{proof}

\begin{lemma} \label{app:lemma3}
    The computation process in Algorithm~\ref{alg:pbcount-greedy-mc} follows an $X,Y$ graded project join tree to compute projected valuation at each node.
\end{lemma}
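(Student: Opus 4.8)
The plan is to exhibit an explicit $X,Y$-graded project join tree $\mathcal{T} = (T, r, \gamma, \pi)$ read off from the execution trace of Algorithm~\ref{alg:pbcount-greedy-mc}, and then to check by structural induction that the ADD held at each node coincides with the projected valuation $g_n$ of Definition~\ref{app:def:proj-valuation}. First I would define the tree: each individual constraint ADD compiled at line~\ref{alg-line:pbcount-greedy-mc:compile-add-end} becomes a leaf $l$ with $\gamma(l)$ its constraint; each non-projection iteration (lines~\ref{alg-line:pbcount-greedy-mc:non-projection-start}--\ref{alg-line:pbcount-greedy-mc:cache1}) that pops $y$ and merges at line~\ref{alg-line:pbcount-greedy-mc:nps-merge} becomes an internal node $n$ whose children $\child{n}$ are exactly the ADDs merged in that iteration, with $\pi(n) = \{y\}$; symmetrically each projection iteration (lines~\ref{alg-line:pbcount-greedy-mc:projection-start}--\ref{alg-line:pbcount-greedy-mc:cache2}) yields an internal node with $\pi(n) = \{x\}$; and the terminal merge (line~\ref{alg-line:pbcount-greedy-mc:final-merge}) is the root $r$ with $\pi(r) = \emptyset$. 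I would place the non-projection nodes in $\mathcal{I}_Y$ and the projection nodes together with $r$ in $\mathcal{I}_X$. Note the concrete tree depends on the order in which $\mathsf{popNextVar}$ returns variables, but the argument below is order-agnostic given the two-phase structure, so it applies to whatever order the \greedymergeshort{} heuristic produces.

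Second, I would verify that $\mathcal{T}$ is a project join tree. The partition condition is immediate, since each variable of $F$ is popped exactly once (from either $X$ or $Y$), so the nonempty labels $\{\pi(n)\}$ are precisely the singletons over $\var{F}$. The descendant condition is the crux: I must show that whenever $v \in \pi(n)$ and $v \in \var{c}$, the leaf $\gamma^{-1}(c)$ is a descendant of $n$. I would establish this through the loop invariant that each intermediate ADD represents a set of constraints and contains exactly those of their variables not yet projected; because line~\ref{alg-line:pbcount-greedy-mc:nps-merge} (resp.\ line~\ref{alg-line:pbcount-greedy-mc:ps-merge}) merges every current ADD still containing the popped variable before projecting it away, a short induction shows every constraint ADD mentioning $v$ lies in the subtree rooted at $n$.

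Third, I would check the four $X,Y$-grading conditions. The partition of internal nodes into $\mathcal{I}_X$ and $\mathcal{I}_Y$ holds by construction; $\pi(n_Y) = \{y\} \subseteq Y$ and $\pi(n_X) \in \{\{x\}, \emptyset\} \subseteq X$ give conditions (2) and (3). The decisive condition (4)---that no $\mathcal{I}_X$ node is a descendant of an $\mathcal{I}_Y$ node---follows from the two-phase structure: the entire $Y$ loop completes before the $X$ loop begins, so every $\mathcal{I}_X$ node is created after, and hence as an ancestor of rather than a descendant of, every $\mathcal{I}_Y$ node it absorbs through merging.

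Finally, I would close with a bottom-up induction identifying the algorithm's ADD at each node with $g_n$: leaves carry $[\gamma(n)]$, matching the base case; an $\mathcal{I}_Y$ node takes the product of its children's ADDs followed by the $\orproj$ of line~\ref{alg-line:pbcount-greedy-mc:cache1}, which is exactly $\bigexists_{\pi(n)} \prod_{o \in \child{n}} g_o$; and an $\mathcal{I}_X$ node takes the product followed by the $\addproj$ of line~\ref{alg-line:pbcount-greedy-mc:cache2}, matching $\sum_{\pi(n)} \prod_{o \in \child{n}} g_o$, where in the unweighted PB setting the weighted valuation $g^W_o$ coincides with $g_o$. The hardest part will be the descendant condition together with condition (4): both hinge on rendering the algorithm's ``merge every occurrence, then project'' discipline and its strict $Y$-before-$X$ phasing as tree-structural statements. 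Once these are in place, $\mathcal{T}$ is a genuine $X,Y$-graded project join tree whose node valuations are computed by Algorithm~\ref{alg:pbcount-greedy-mc}, and Lemma~\ref{app:lemma1} and Lemma~\ref{app:lemma2} then apply verbatim.
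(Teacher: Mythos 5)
Your proposal is correct and follows essentially the same route as the paper's proof: both read off the tree from the execution trace (leaves as constraint ADDs, internal nodes as the merge-and-project iterations labelled by the popped variable), verify the project-join-tree and grading conditions via the ``merge all occurrences, then project'' discipline and the strict $Y$-before-$X$ phasing, and conclude with a bottom-up identification of each node's ADD with the projected valuation $g_n$. Your treatment is if anything slightly more careful than the paper's, e.g.\ in making the loop invariant for the descendant condition explicit.
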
 

\begin{proof}
    We split the proof into two parts, first showing by construction that Algorithm~\ref{alg:pbcount-greedy-mc} follows an $X,Y$ graded project join tree. Subsequently, we show that the computation at each node is exactly according to the definition of projected valuation. 

    Let $T$ be a tree, with leaf nodes having a bijective relationship with the individual ADDs at the start of computation of Algorothm~\ref{alg:pbcount-greedy-mc} (line~\ref{alg-line:pbcount-greedy-mc:compile-add-end}). Let each merged intermediate ADD $\psi$ at lines~\ref{alg-line:pbcount-greedy-mc:cache1} and~\ref{alg-line:pbcount-greedy-mc:cache2} map to an internal node of $T$, with the internal node's descendants being all the ADDs (denoted $\varphi$) involved in the merging process at line~\ref{alg-line:pbcount-greedy-mc:nps-merge} and~\ref{alg-line:pbcount-greedy-mc:ps-merge} respectively. In addition, let the variable projected away in the merge and project iteration be the label of the respective internal node given by labelling function $\pi$. The root node of $T$ would correspond to the final merged ADD $\psi$ at line~\ref{alg-line:pbcount-greedy-mc:final-merge} of Algorithm~\ref{alg:pbcount-greedy-mc}. 

    Recall that an $X,Y$-graded project join tree $\mathcal{T}$ has two disjoint sets of internal nodes $\mathcal{I}_{X}$ and $\mathcal{I}_{Y}$, of $X$ grade and $Y$ graded respectively. The structural requirements of $X,Y$-graded project join tree is such that nodes in $\mathcal{I}_{Y}$ cannot have any node of $\mathcal{I}_{X}$ as descendant in $\mathcal{T}$. We let internal nodes of $T$ mapped in line~\ref{alg-line:pbcount-greedy-mc:nps-merge} of Algorithm~\ref{alg:pbcount-greedy-mc} to be of grade $I_Y$ and the internal nodes mapped in line~\ref{alg-line:pbcount-greedy-mc:ps-merge} to be of grade $I_X$. Notice that we labelled all internal nodes, hence $\{\mathcal{I}_X , \mathcal{I}_Y\}$ is a partition of all internal nodes, satisfying property 1 of definition of $X,Y$-graded project join tree (main paper). It is clear that properties 2 and 3 of $X,Y$-graded project join tree (main paper) are satisfied as the $\pi$ labels each internal node $n$ to the variables projected away from it.
    
    Since all internal nodes of grade $I_Y$ are produced before $I_X$ by design of Algorithm~\ref{alg:pbcount-greedy-mc}, none of the internal nodes of $\mathcal{I}_{X}$ would be descendants of internal nodes in $\mathcal{I}_{Y}$, satisfying property 4 of $X,Y$-graded project join tree (main paper). 
    
    Recall that a project join tree's labelling function $\pi$ should label each internal node with a set of variables such that the different labels of internal nodes partition the set of variables of the original PB formula $F$. For each variable $v$, notice that Algorithm~\ref{alg:pbcount-greedy-mc} only projects away $v$ once, after merging the corresponding ADDs containing $v$. Hence, there is no repetition of labels, and the labels of all internal nodes partitions $X \cup Y$.

    In addition, an $X,Y$-graded project join tree has leaf nodes that has one to one mapping $\gamma$ to constraints in the original formula. In $T$, recall that each leaf node is one-to-one mapped to an individual ADD at the start of Algorothm~\ref{alg:pbcount-greedy-mc}. Since each individual ADD directly represents a constraint in the original PB formula, leaf nodes of $T$ have a bijection to constraints in the PB formula by transitivity. It is clear that property 2 of project join tree (main paper) holds, because at each merge and project iteration, we merge all ADDs containing the selected variable (lines~\ref{alg-line:pbcount-greedy-mc:nps-merge} and~\ref{alg-line:pbcount-greedy-mc:ps-merge}).

    Since tree $T$ arising from the computation process of Algorithm~\ref{alg:pbcount-greedy-mc} meets the specifications of an $X,Y$-graded project join tree, the computation process of Algorithm~\ref{alg:pbcount-greedy-mc} indeed follows $T$, an $X,Y$-graded project join tree.

    Notice that at each leaf node $n \in \mathcal{L}(T)$, we have the representation of a constraint $[\gamma(n)]$, which is the individual ADD at the start of Algorithm~\ref{alg:pbcount-greedy-mc}. At each node $n \in \mathcal{I}_Y$, we merge the ADDs of descendants using the {\apply} operation with $\times$ operator and {\orproj} away variables labelled at $n$ on line~\ref{alg-line:pbcount-greedy-mc:cache1}, computing $\exists_{\pi(n)} (\prod_{o \in \child{n}} g_{o})$. Similarly at each node $n \in \mathcal{I}_X$, we again merge ADDs of descendants in the same manner but {\addproj} away variables on line~\ref{alg-line:pbcount-greedy-mc:cache2}, computing $\sum_{\pi(n)} (\prod_{o \in \child{n}} g_{o})$. Since Algorithm~\ref{alg:pbcount-greedy-mc} computations follow Definition~\ref{app:def:proj-valuation} at all nodes, it computes $g_n$.

    Thus, we showed that the computation process in Algorithm~\ref{alg:pbcount-greedy-mc} follows an $X,Y$ graded project join tree to compute projected valuation at each node.
\end{proof}

Recall that we stated Theorem~\ref{theorem:count-correctness} in the main paper, we restate it here for convenience and provide the proof as follows.

\begin{mythm}{\ref{theorem:count-correctness}}
    Let $F$ be a formula defined over $X \cup Y$ such that $X$ is the projection set, and $Y$ is the set of variables not in projection set, then given an instance $(F,X,Y)$, Algorithm ~\ref{alg:pbcount-greedy-mc} returns $c$ such that $c = \sum_{\beta \in 2^X} (\max_{\alpha \in 2^Y} [F](\alpha \cup \beta))$ 
\end{mythm}

\begin{proof}
    Let the root node of $\mathcal{T}$ be $r$. Notice that because $r$ is the root node, $\Phi(r) = F$ and $P(r) = X \cup Y$. 
    By Lemma~\ref{app:lemma3}, the computation process of Algorithm~\ref{alg:pbcount-greedy-mc} follows an $X,Y$ graded project join tree $\mathcal{T}$. 
    
    Using Lemma~\ref{app:lemma2},
    \begin{equation*}
        g_r = \sum_X \bigexists_Y h_r = \sum_X \bigexists_Y \prod_{C \in F} [C] = \sum_X \bigexists_Y [F] 
    \end{equation*}

    Notice that $\sum_X \bigexists_Y [F]$ is exactly $\sum_{\beta \in 2^X} (\max_{\alpha \in 2^Y} [F](\alpha \cup \beta))$. Hence, $g_r$ is the $Y$-projected count of $F$ and Algorithm~\ref{alg:pbcount-greedy-mc} produces the count $c = \sum_{\beta \in 2^X} (\max_{\alpha \in 2^Y} [F](\alpha \cup \beta))$.
\end{proof}

\end{document}